%% file: main.tex
\documentclass[letterpaper]{article} 
\usepackage{aaai2027}  
\usepackage[hyphens]{url}  
\usepackage{graphicx} 
\usepackage{natbib}  
\usepackage{caption} 
\usepackage{algorithm}
\usepackage{algorithmic}

\usepackage{newfloat}
\usepackage{listings}
\DeclareCaptionStyle{ruled}{labelfont=normalfont,labelsep=colon,strut=off} 
\floatstyle{ruled}
\newfloat{listing}{tb}{lst}{}
\floatname{listing}{Listing}

\usepackage{booktabs}
\usepackage{multirow}
\usepackage{amsthm}
\usepackage{amsmath}
\usepackage{amssymb}
\usepackage{bm}
\usepackage{xcolor}
\usepackage{tcolorbox}
\tcbuselibrary{skins}

\definecolor{deltapos}{HTML}{2E7D32}  
\definecolor{deltaneg}{HTML}{C62828}  
\makeatletter
\def\deltapct@dispatch#1#2\deltapct@stop{%
  \ifx#1-%
    \textcolor{deltaneg}{$\blacktriangledown$\,$#2$}%
  \else\ifx#1+%
    \textcolor{deltapos}{$\blacktriangle$\,$#2$}%
  \else
    \textcolor{deltapos}{$\blacktriangle$\,$#1#2$}%
  \fi\fi}
\newcommand{\deltapct}[1]{%
  {\scriptsize\,\deltapct@dispatch#1\deltapct@stop}}
\newcommand{\nodelta}{\phantom{\deltapct{+0.00}}}
\makeatother
\definecolor{topone}{HTML}{A3DC9A}
\definecolor{toptwo}{HTML}{DEE791}
\definecolor{topthree}{HTML}{FFF9BD}
\newcommand{\tone}[1]{\colorbox{topone}{\textbf{#1}}}
\newcommand{\ttwo}[1]{\colorbox{toptwo}{#1}}

\newtheorem{proposition}{Proposition}

\newcommand{\calL}{\mathcal{L}}

\newcommand{\calR}{\mathcal{R}}

\newcommand{\calG}{G}
\newcommand{\calT}{\mathcal{T}}
\newcommand{\calF}{\mathcal{F}}
\newcommand{\calJ}{\mathcal{J}}
\newcommand{\calD}{\mathcal{D}}
\newcommand{\E}{\mathbb{E}}

\input{tables}

\title{Hindsight-Anchored Policy Optimization: Learning Through Hindsight with Thompson Sampling-Inspired Adaptive Gating}

\author {
    Yuning Wu\thanks{Equal contribution},
    Ke Wang\footnotemark[1],
    Haoran Liu,
    Chaoqun Jia,
    Devin Chen,
    Kai Wei
}
\affiliations {
    Amazon\\
    \{yuningwu*, kewangv*, liuhr, enzojia, devichen, kaiwe\}@amazon.com
}

\begin{document}

\maketitle

\begin{abstract}
    Reinforcement Learning with Verifiable Rewards improves reasoning in large language models, yet on-policy learning often suffers from cold-start challenges in sparse-reward settings. Recent mixed-policy approaches address this by combining off-policy teacher data with on-policy training. However, simply combining these introduce a persistent off-policy gradient mass that risks training collapse and instability. To address this challenge, we propose Hindsight-Anchored Policy Optimization (HAPO), a framework that allows teacher intervention to act as a temporary support. HAPO employs \emph{Beta-Binomial confidence gating}, an adaptive gating mechanism that decides when to open the gate for teacher intervention. The intervention operates with \emph{Synthetic Success Injection}, which replaces the group's lowest-reward rollout with a verified teacher trajectory. We also introduce \emph{adaptive threshold annealing}, which gradually retracts the support and restores on-policy training within a finite horizon to mitigate persistent off-policy drift. We demonstrate that HAPO can be layered on top of existing mixed-policy methods in a generalizable manner. Across six math reasoning benchmarks and two model scales, HAPO improves the average accuracy of three major mixed-policy methods while maintaining training stability.
\end{abstract}


\section{Introduction}

Reinforcement Learning with Verifiable Rewards (RLVR) improves reasoning in large language models (LLMs) by optimizing for outcome correctness~\citep{lambert2024tulu}. On-policy reinforcement learning enables models to explore diverse solution paths and discover reasoning strategies beyond those present in supervised datasets~\citep{shenfeld2025rlsrazoronlinereinforcement,shao2024deepseekmath}. However, learning can be hindered in sparse-reward environments due to the lack of informative feedback during the early stages of training. A notable example is GRPO's group-relative advantage estimation: when all sampled rollouts for a prompt fail, they receive identical rewards, resulting in zero advantage for every trajectory. As a result, the policy gradient vanishes, preventing the model from learning from these failed attempts~\citep{shao2024deepseekmath}.

Recent mixed-policy approaches address the cold-start challenge by combining off-policy teacher data with on-policy training. Prior work has shown that verified demonstrations provide informative supervision for reasoning tasks~\citep{ouyang2022traininglanguagemodelsfollow,wei2022finetunedlanguagemodelszeroshot}. Recent works incorporate teacher demonstrations directly into the online training loop instead of relying on a separate supervised pretraining stage~\citep{zhang2025onpolicyrlmeetsoffpolicy,lv2026unifiedviewlargelanguage,liu2025uftunifyingsupervisedreinforcement,ma2025learningreinforcementlearningcant,huang2025blending}. These methods differ in how teacher trajectories are sampled, weighted, or scheduled, and some gradually reduce the amount of guidance~\citep{yan2025learningreasonoffpolicyguidance,fu2025srftsinglestagemethodsupervised,su2025trustregionadaptivepolicyoptimization}. To our best knowledge, all existing approaches rely on off-policy supervision throughout training.  While mixed-policy alleviates sparse-reward cold starts, it also introduces a persistent off-policy gradient mass that risks training collapse and instability. This raises a fundamental question: 
\textbf{
\emph{can teacher guidance enter as temporary, self-retracting support in sparse-reward settings, rather than remain a sustained component over the global training horizon?}
}

\begin{figure*}[t]
    \centering
    \includegraphics[width=0.85\textwidth]{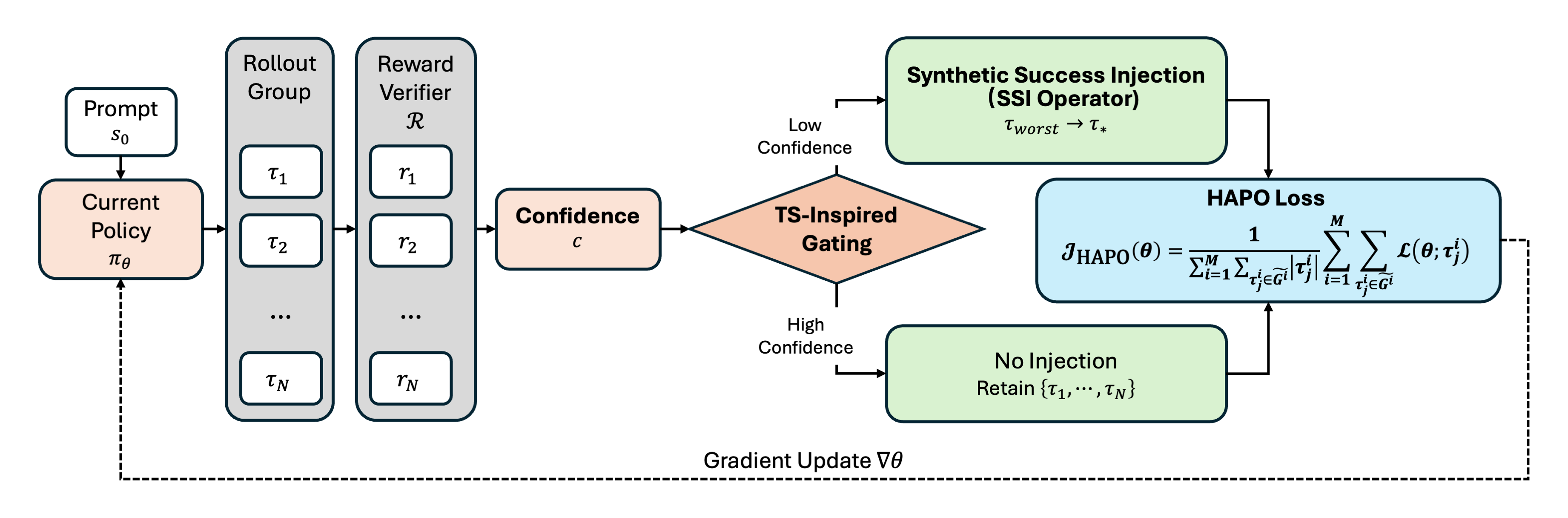}
    \caption{Hindsight-Anchored Policy Optimization (HAPO) system architecture.}
    \label{fig:hapo_arc}
\end{figure*}

To address this question, we introduce \textbf{Hindsight-Anchored Policy Optimization (HAPO)}, a framework for mixed-policy training that ties teacher intervention to the policy's current competence. HAPO targets existing methods that inject teacher trajectories into group-based policy optimization during the rollout phase. Rather than reshaping how the injected trajectory enters the objective, HAPO controls \emph{when} teacher guidance is introduced and \emph{how long} it remains active. HAPO has three components. \emph{Synthetic Success Injection (SSI)} replaces the lowest-reward trajectory of a rollout group with a verified teacher trajectory, placing a successful anchor in a group whose own rollouts yield no advantage. \emph{Beta-Binomial confidence gating (or TS-Inspired gating)}, inspired by Thompson sampling, estimates the per-prompt success rate as the posterior mean of a Beta-Binomial model and opens the gate for SSI when this estimate falls below a threshold. \emph{Adaptive threshold annealing} decides \emph{how long} guidance remains available. The schedule lowers the threshold below the smallest attainable value of the confidence estimate, after which the gate cannot reopen. This bounds the total off-policy exposure of a run and returns the update to pure on-policy training at a known epoch. We instantiate HAPO on top of LUFFY~\citep{yan2025learningreasonoffpolicyguidance}, SRFT~\citep{fu2025srftsinglestagemethodsupervised}, and TRAPO~\citep{su2025trustregionadaptivepolicyoptimization}, three representative mixed-policy methods, to demonstrate its generalizability.

\paragraph{Contributions.} \textbf{(1) A Beta-Binomial confidence gate for adaptive success injection under sparse rewards.} We characterize the condition that motivates mixed-policy injection, namely rollout groups with too few successful trajectories, whose group-relative advantages provide little learning signal during the sparse cold-start stage. From this observation we derive a gate that estimates the per-prompt success rate as a Beta--Binomial posterior mean, following the Bayesian view of Thompson sampling, and compares this estimate against a threshold to decide when the gate opens. \textbf{(2) Annealing as a measure for global-horizon training stability.} We show that controlling injection over the global horizon is necessary once supervised and RL gradients are mixed within each rollout group, and that annealing the gate to extinction at a finite, closed-form epoch keeps training stable. \textbf{(3) Consistent gains across methods and scales.} Across six math reasoning benchmarks, two base-model scales, and three recent mixed-policy host methods, HAPO improves the average accuracy of the method it augments while keeping training stable, with the largest gain of $+4.52$ points for SRFT at the Qwen2.5-7B scale.

\section{Related Work}

\paragraph{Reinforcement Learning for Reasoning.}
Post-training for LLM reasoning has increasingly shifted toward RLVR, where correctness can be checked directly using rule-based or programmatic verifiers~\citep{lambert2024tulu, tie2025surveyposttraininglargelanguage}. Optimization methods such as PPO~\citep{schulman2017proximalpolicyoptimizationalgorithms} and GRPO~\citep{shao2024deepseekmath} have shown that relatively simple reward signals can lead to substantial reasoning gains. However, recent analyses suggest that purely on-policy reasoning RL remains strongly dependent on the initialization of the base model and struggles in sparse-reward regimes where successful trajectories are rarely discovered~\citep{yue2025doesreinforcementlearningreally, yu2025dapoopensourcellmreinforcement, zeng2025simplerlzooinvestigatingtamingzero}. Our work is motivated by this cold-start limitation.

\paragraph{Combining Exploration and Imitation.} A long-standing challenge in policy optimization is how to combine exploratory RL with off-policy teacher guidance. The standard sequential recipe of SFT followed by RL is effective in practice, but suffers from distribution drift and catastrophic forgetting once the policy moves away from the teacher distribution \citep{zhang2025onpolicyrlmeetsoffpolicy, yoshihara2025practicaltwostagerecipemathematical}. More recent mixed-policy methods instead integrate teacher demonstrations directly into the RL loop through token-level, sample-level, or group-level interventions \citep{yan2025learningreasonoffpolicyguidance,fu2025srftsinglestagemethodsupervised,huang2025blending,liu2025uftunifyingsupervisedreinforcement}. These approaches improve optimization stability, but most use replacement or weighting schemes that remain active throughout training. HAPO is complementary to this line of work: rather than replacing the base method, it modifies \emph{when} and \emph{how long} teacher intervention is applied.

\paragraph{Adaptive Hybrid Post-Training.} Several recent methods introduce adaptive or staged mechanisms for balancing supervised and reinforcement signals \citep{lv2026unifiedviewlargelanguage,ma2025learningreinforcementlearningcant,fu2025srftsinglestagemethodsupervised,su2025trustregionadaptivepolicyoptimization}. These works recognize that the role of teacher data should vary over the course of training. Our framework shares this motivation, but differs in two key ways. First, HAPO operates as a method-agnostic add-on that leaves the host objective intact. Second, its schedule is specified by the group size plus a single epoch-valued knob, and retires intervention at a finite, pre-specified epoch rather than through an asymptotically decaying weight.

\paragraph{Hindsight-Based Correction.} HAPO is conceptually related to hindsight-based methods in reinforcement learning~\citep{andrychowicz2018hindsightexperiencereplay}, which reinterpret failed experiences to extract useful learning signals. These methods relabel unsuccessful trajectories with achieved outcomes, apply policy gradient updates with hindsight goals~\citep{rauber2019hindsightpolicygradients}, prioritize hindsight samples based on energy metrics~\citep{zhao2020energybasedhindsightexperienceprioritization}, or combine hindsight replay with trust region optimization~\citep{zhang2021hindsighttrustregionpolicy}. HAPO's SSI adapts this intuition to mixed-policy reasoning RL: when an entire rollout group fails, HAPO inserts a verified teacher trajectory as a hindsight anchor, converting an otherwise uninformative group into a useful optimization signal. It adapts a hindsight correction principle into the mixed-policy LLM post-training setting.

\section{Preliminaries and Problem Formulation}

In this section, we review the components needed to formalize HAPO: the MDP formulation of autoregressive language generation, Group Relative Policy Optimization (GRPO)~\citep{shao2024deepseekmath}, and the Bayesian confidence estimate that underlies our Thompson sampling-inspired adaptive gating mechanism.

\subsection{Markov Decision Process}

We formulate language generation as a finite-horizon Markov Decision Process $M=(S,A,P,R,\gamma)$ \citep{Sutton1998,murphy2025reinforcementlearningoverview,zhang2025surveyreinforcementlearninglarge}. A state $s_t \in S$ consists of the prompt together with previously generated tokens, an action $a_t \in A$ is the next token, the transition is deterministic given token generation, and we use $\gamma=1$. A trajectory is denoted by $\tau = \{s_0,a_1,\dots,s_T,a_T\}$. The objective is to learn a policy $\pi_\theta$ that maximizes expected trajectory reward:
\begin{equation}
\arg\max_\theta \calJ(\theta) = \mathbb{E}_{\tau \sim \pi_\theta(\cdot \mid s_0)}[R(\tau)]
\label{eq:objective}
\end{equation}

\subsection{Group Relative Policy Optimization}
A common approach to maximize the objective in Eq.~(\ref{eq:objective}) is Proximal Policy Optimization (PPO)~\citep{schulman2017proximalpolicyoptimizationalgorithms}. PPO maintains both an actor and a critic network, which adds computational and memory cost when training LLMs. GRPO~\citep{shao2024deepseekmath} removes the critic network and instead estimates advantages from the relative performance of grouped trajectories.

We consider a curated dataset $\mathcal{D}=\{(s_0^i, \tau_*^i) : i \in \{1, \ldots, M\}\}$, where $s_0^i$ is the prompt (initial state) and $\tau_*^i$ is the teacher trajectory. For each prompt $s_0^i$, $N$ trajectories are sampled using the old policy $\pi_{\theta_{\text{old}}}$, forming a group of samples denoted as $\calG^i = \{\tau_j^i : j \in \{1, \ldots, N\} \}$. GRPO computes the advantage of each trajectory by normalizing rewards within the group $\calG^i$:
\begin{equation}
A_j^i = \frac{\calR(\tau_j^i) - \text{mean}(\{\calR(\tau_k^i): \tau_k^i \in \calG^i\} \})}{\text{std}(\{\calR(\tau_k^i) : \tau_k^i \in \calG^i\}\})}
\label{eq:advantage}
\end{equation}

Considering the clipped surrogate objective from PPO, the GRPO aggregates over all groups:

\begin{equation}
\calJ_{\text{GRPO}}(\theta) = \frac{1}{\sum_{i=1}^M \sum_{j=1}^N |\tau_j^i|} \sum_{i=1}^M \sum_{j=1}^N \sum_{t=1}^{|\tau_j^i|} \text{CLIP}(r_{j,t}^i(\theta), A_j^i, \epsilon)
\label{eq:grpo}
\end{equation}

where $r_{j,t}^i(\theta) = \frac{\pi_\theta(\tau_{j,t}^i|s_0^i, \tau_{j,<t}^i)}{\pi_{\theta_{\text{old}}}(\tau_{j,t}^i|s_0^i, \tau_{j,<t}^i)}$ is the importance sampling ratio, and $\text{CLIP}(r, A, \epsilon) = \min[r \cdot A, \text{clip}(r; 1-\epsilon, 1+\epsilon) \cdot A]$ keeps the updated policy within a trust region of the old policy. Following recent studies~\citep{yu2025dapoopensourcellmreinforcement, liu2025understanding}, we exclude the KL penalty as it has minimal impact on performance.

\subsection{Thompson Sampling and Beta-Binomial Posterior Mean}
Thompson sampling~\citep{Sutton1998} is a Bayesian approach to the exploration-exploitation tradeoff that selects actions by sampling from the posterior distribution over each action's expected reward. We adopt its posterior model, though not its sampling rule, in order to quantify how well the current policy handles a given prompt. For each prompt $s_0^i$, define the prompt quality parameter $\alpha_{s_0^i} \in [0,1]$ as the expected reward under the current policy $\pi_{\theta}$,
$\alpha_{s_0^i} = \E_{\tau \sim \pi_\theta \mid s_0^i} [\mathcal{R}(\tau)]$, where the reward operator $\mathcal{R}(\tau)$ returns 1 if trajectory $\tau$ outputs the correct final answer and 0 otherwise. This quantity is unavailable before trajectory sampling, so we place a uniform prior $\alpha_{s_0^i} \sim \text{Beta}(1,1)$ on it.

The group of trajectories $\calG^i$ sampled for prompt $s_0^i$ constitutes $N$ Bernoulli trials, each succeeding with probability $\alpha_{s_0^i}$. The number of successes $S_i = \sum_{j=1}^N \mathcal{R}(\tau_j^i)$ therefore satisfies $S_i \sim \text{Binomial}(N, \alpha_{s_0^i})$, and Beta-Binomial conjugacy~\citep{bishop2006pattern} yields the posterior distribution and its mean,
\begin{equation}
   \alpha_{s_0^i} \mid \calG^i \sim \text{Beta}(1 + S_i, 1 + N - S_i) ,\quad c_i = \frac{1 + S_i}{2 + N}
   \label{eq:bayes_confidence_score}
\end{equation}

We define the Bayesian confidence score $c_i$ of a prompt as this posterior mean, which combines the observed success count with the uniform prior. Two properties matter for our purpose. First, the prior keeps $c_i$ strictly inside $(0,1)$, so a group with no observed success receives a small positive estimate rather than certainty of failure, which is appropriate at the group sizes used in practice ($N=8$). Second, $c_i$ takes values on a grid whose spacing is determined by $N$, whereas the raw rate $S_i/N$ assigns the same value to every failed group regardless of the group size. Section~\ref{sec:gating} exploits this dependence to set the gating threshold from $N$ alone. We use the posterior mean rather than a draw from the posterior, because the score serves as a control signal rather than an exploration device, and sampling would add variance without changing which groups are selected.

\section{Hindsight-Anchored Policy Optimization}

In this section, we present HAPO, a method-agnostic framework that augments mixed-policy post-training with three components, namely the Synthetic Success Injection (SSI) operator, Beta-Binomial Confidence Gating, and adaptive threshold annealing. We then state the overall training objective. The complete procedure is given in Algorithm~\ref{algorithm:hapo}.

\subsection{The Synthetic Success Injection (SSI) Operator}

When a group $\calG^i$ exhibits low confidence, the model's policy requires additional guidance to improve learning. To address this scenario, we define the SSI operator $\calT$ that operates at the group level. Within a low-confidence group $\calG^i$, the poorest-performing trajectory $j^*=\arg\min_{j} \mathcal{R}(\tau_j^i)$ is identified and replaced by a verified teacher trajectory $\tau_*^i$:

\begin{equation}
  \mathcal{T}(\calG^i) = \{\tau_1^i, ..., \tau_{j^*-1}^i, \tau_*^i, \tau_{j^*+1}^i, ..., \tau_N^i\}
  \label{eq:transform}
\end{equation}

The teacher trajectory acts as a hindsight anchor. It restores reward variance inside the group, so the update recovers a direction of improvement on a prompt the current policy cannot solve. Section~\ref{sec:gating} decides on which groups $\calT$ is applied.

\subsection{Beta-Binomial Confidence Gating}
\label{sec:gating}

Applying $\calT$ to every group would inject teacher guidance where the policy already succeeds. When most trajectories in $\calG^i$ receive reward 1, the on-policy gradient is informative and intervention is unwarranted. We therefore gate SSI on the Bayesian confidence score $c_i$ of Eq.~(\ref{eq:bayes_confidence_score}) and apply $\calT$ only when $c_i<\gamma_t$, where $\gamma_t$ is the gating threshold at step $t$.

While $\gamma_t$ can be set manually, its admissible range is determined by the group size $N$. Because every trajectory reward is binary, the confidence score takes values only on the discrete ladder
\begin{equation}
c_i \in \left\{ \tfrac{1}{2+N},\ \tfrac{2}{2+N},\ \dots,\ \tfrac{N+1}{2+N} \right\},
\qquad c_i = \tfrac{1+S_i}{2+N}
\label{eq:confidence-ladder}
\end{equation}
The gate $c_i<\gamma_t$ therefore depends not on the value of $\gamma_t$ but only on \emph{which two rungs it separates}, so any threshold inside a rung gap behaves identically. This reduces the choice of $\gamma_t$ to a single integer $m\ge 0$, the largest successful rollout count still treated as group failure.

\begin{proposition}[Gate calibration]
\label{prop:gate}
For any integer $m\ge 0$,
\begin{equation}
\tfrac{m+1}{2+N} < \gamma_t \le \tfrac{m+2}{2+N}
\iff
\bigl[\,\calT\ \text{applied}\,\bigr]\Leftrightarrow\bigl[\,S_i\le m\,\bigr].
\label{eq:all-fail-band}
\end{equation}
\end{proposition}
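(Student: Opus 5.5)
The plan is to reduce the gate condition $c_i<\gamma_t$ to an integer inequality on $S_i$, and then use the integrality of $S_i$ together with the discrete ladder in Eq.~(\ref{eq:confidence-ladder}). Since $c_i=\tfrac{1+S_i}{2+N}$ with $2+N>0$, the event ``$\calT$ applied'' is exactly
\[
1+S_i<\gamma_t(2+N).
\]
Write $g:=\gamma_t(2+N)$. The statement then becomes a claim about where the real number $g$ sits relative to the consecutive integers $m+1$ and $m+2$. The right-hand side of Eq.~(\ref{eq:all-fail-band}) is read as holding for every attainable $S_i\in\{0,1,\dots,N\}$.

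For the forward direction ($\Rightarrow$), assume $m+1<g\le m+2$. If $S_i\le m$, then $1+S_i\le m+1<g$, so the gate opens. If $S_i\ge m+1$, then $1+S_i\ge m+2\ge g$, so the gate stays closed. Hence $[\calT\text{ applied}]\Leftrightarrow[S_i\le m]$ for every $S_i$. This direction needs only that $S_i$ is an integer, and it holds for every $m\ge 0$.

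For the converse ($\Leftarrow$), assume the equivalence holds for all $S_i\in\{0,\dots,N\}$. I would evaluate it at the two rungs adjacent to the cut.
\begin{itemize}
\item At $S_i=m$, the equivalence says the gate opens, so $m+1<g$, which gives the strict lower bound $\gamma_t>\tfrac{m+1}{2+N}$.
\item At $S_i=m+1$, it says the gate is closed, so $g\le m+2$, which gives the upper bound $\gamma_t\le\tfrac{m+2}{2+N}$.
\end{itemize}

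The only delicate point, and the main obstacle, is that both test values must actually be attainable success counts, which requires $m+1\le N$, i.e.\ $0\le m\le N-1$. If $m\ge N$, then ``$S_i\le m$'' holds for every group, and any $\gamma_t>\tfrac{N+1}{2+N}$ realizes it. The upper bound in Eq.~(\ref{eq:all-fail-band}) can then fail, so the reverse implication is false as literally stated. I would therefore state the proposition for $0\le m\le N-1$, which is the regime in which $m$ meaningfully separates two rungs of the ladder. I would add a remark that for $m\ge N$ only the forward direction survives, since such a threshold applies SSI to every group.
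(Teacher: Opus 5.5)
Your proof is correct. For the direction the paper actually proves, it uses the same argument: rewrite $c_i<\gamma_t$ as $1+S_i<(2+N)\gamma_t$, then use the integrality of $S_i$ to see that every $g=(2+N)\gamma_t\in(m+1,m+2]$ cuts the integers at the same place. Your version bounds $1+S_i$ against an arbitrary $g$ in the band, which is slightly cleaner than the paper's check of the two endpoints.

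The paper's proof stops at that point and never argues the reverse implication of the outer $\iff$. Your converse fills this gap by testing the equivalence at the adjacent rungs $S_i=m$ and $S_i=m+1$.

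You are also right that the converse needs both rungs to be attainable, i.e.\ $0\le m\le N-1$. For $m\ge N+1$ the band lies strictly above $1$. Yet any $\gamma_t\in(\tfrac{N+1}{N+2},1]$ already opens the gate on every group, and then $S_i\le m$ holds trivially. So the reverse implication is false for such $m$, even though the statement claims it for ``any integer $m\ge 0$''. For $m=N$ it fails only if thresholds above $1$ are admitted. Your restriction, with the remark that only the forward direction survives for $m\ge N$, is the correct reading of the proposition.
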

\begin{proof}
$c_i<\gamma_t$ reads $1+S_i<(2+N)\gamma_t$. Under the stated band this is
$1+S_i<m+2$ at the upper end and $1+S_i\le m+1$ at the lower; since $S_i$ is an
integer, both are equivalent to $S_i\le m$.
\end{proof}

Taking the midpoint of the corresponding gap gives a tuning-free threshold which adjusts with $N$.
\begin{equation}
\gamma_0 \;=\; \frac{m+\tfrac{3}{2}}{2+N},
\label{eq:gamma0-canonical}
\end{equation}
Our default is $m=0$, injecting only when the entire group fails ($S_i=0$). Larger $m$ is useful when the verifier is noisy or when a single successful rollout is weak evidence of competence. The same formulation covers both cases without further modification.

\begin{algorithm}[t]
\caption{Hindsight-Anchored Policy Optimization}
\label{algorithm:hapo}
\textbf{Input}: Dataset $\calD=\{(s_0^i,\tau_\ast^i)\}$; policy $\pi_\theta$; group size $N$; groups per step $M$; steps per epoch $T_{\mathrm{epoch}}$; turn-off epoch $e_{\mathrm{off}}$; steepness $k$; failure level $m$ ($m{=}0$ by default); learning rate $\alpha$\\
\textbf{Output}: Optimized policy $\pi_\theta$
\begin{algorithmic}[1]
\STATE $\gamma_0 \leftarrow \tfrac{m+3/2}{2+N}$,\quad
       $\gamma_\infty \leftarrow \tfrac{1}{2(2+N)}$
       \COMMENT{gate endpoints derived from $N$}
\FOR{step $t = 1$ to $T$}
    \STATE $e_t \leftarrow t / T_{\mathrm{epoch}}$
    \STATE $\gamma_t \leftarrow \gamma_\infty + (\gamma_0-\gamma_\infty)\bigl(1-\sigma(k(e_t-e_{\mathrm{off}}))\bigr)$
    \STATE Sample a batch of prompts $\{s_0^i : i \in \{1,\dots,M\}\}$ from $\calD$
    \FOR{$i = 1$ to $M$}
        \STATE Sample $N$ trajectories $\calG^i = \{\tau_j^i \sim \pi_\theta(\cdot\mid s_0^i) : j \in \{1,\dots,N\}\}$
        \STATE Compute rewards $\calR(\tau_j^i)$ for all $\tau_j^i \in \calG^i$
        \STATE Compute success count $S_i = \sum_{j=1}^{N} \mathbb{I}[\calR(\tau_j^i) = 1]$
        \STATE Compute confidence score $c_i = \frac{1+S_i}{2+N}$
        \IF{$c_i < \gamma_t$}
            \STATE $\widetilde{\calG}^i \leftarrow \calT(\calG^i) = \bigl(\calG^i \setminus \{\arg\min_{\tau \in \calG^i} \calR(\tau)\}\bigr) \cup \{\tau_\ast^i\}$
            \COMMENT{low confidence: replace worst trajectory with teacher trajectory}
        \ELSE
            \STATE $\widetilde{\calG}^i \leftarrow \calG^i$
            \COMMENT{high confidence: keep the group on-policy}
        \ENDIF
    \ENDFOR
    \STATE Update policy: $\theta \leftarrow \theta + \alpha \nabla_\theta \calJ_{\mathrm{HAPO}}(\theta)$ using $\{\widetilde{\calG}^i\}_{i=1}^{M}$
\ENDFOR
\STATE \textbf{return} $\pi_\theta$
\end{algorithmic}
\end{algorithm}

\subsection{Global Curriculum Through Adaptive Threshold Annealing}
\label{sec:annealing}

On top of the adaptive gating, we add an annealing schedule as a global-horizon mechanism that stabilizes training under off-policy drift. Gating is endogenous and closes only as the policy improves, so on prompts the policy never solves it stays open for the whole run. This residual exposure carries a cost. The injected trajectory is not drawn from $\pi_\theta$, so its likelihood ratio is unbounded and its gradient share does not decay as $\pi_\theta$ drifts from the teacher. Sustaining this mixture over the global horizon leads to training collapse (Fig.~\ref{fig:stability}).

The annealing schedule operates between two conditions: the cold-start gate $\gamma_0$ of Eq.~(\ref{eq:gamma0-canonical}) that injects on failed groups, and a floor $\gamma_\infty=\tfrac{1}{2(2+N)}$ that lies below the minimum attainable confidence $\tfrac{1}{2+N}$ (Eq.~\ref{eq:confidence-ladder}), so once $\gamma_t$ reaches the floor the gate is closed for the remaining run. Let $e_t=t/T_{\mathrm{epoch}}$ be training progress in epochs, $\sigma$ the logistic sigmoid, $k$ a fixed steepness ($k=10$), and $e_{\mathrm{off}}>0$ the epoch at which intervention is switched off:
\begin{equation}
\gamma_t = \gamma_\infty + (\gamma_0-\gamma_\infty)\bigl(1-\sigma(k(e_t-e_{\mathrm{off}}))\bigr).
\label{eq:annealed-threshold}
\end{equation}

The schedule proceeds through three phases. During the warm start phase ($e_t\ll e_{\mathrm{off}}$), the threshold satisfies $\gamma_t\approx\gamma_0$ and supplies the corrective signal that the cold-start stage requires. During the smooth transition phase ($e_t\approx e_{\mathrm{off}}$), $\gamma_t$ decreases continuously across the rungs of Eq.~(\ref{eq:confidence-ladder}), so the injection criterion tightens by one success count at a time rather than in a single step. In the finite extinction phase, $\gamma_t$ falls below $\tfrac{1}{2+N}$ after a finite number of epochs, and the gate remains closed for the remainder of the run. Under the default $m=0$, the gate begins by injecting a teacher trajectory only on groups whose rollouts all fail, and ends by injecting none.

\mainresultstable

\subsection{HAPO Objective Function}

After applying the Beta-Binomial confidence gating, each group $\calG^i$ contains either only on-policy trajectories or one teacher-replaced trajectory produced by the SSI operator, denoted as $\widetilde{\calG}^i = \calT(\calG^i)$. For each trajectory $\tau^i_j \in \widetilde{\calG}^i$ we define a per-trajectory contribution. On-policy rollouts use the standard GRPO clipped surrogate objective, and the injected teacher trajectory uses a confidence-weighted teacher-forcing term. Formally, the HAPO objective is given by:

\begin{equation}
\calJ_{\mathrm{HAPO}}(\theta)
=
\frac{1}{\sum_{i=1}^M \sum_{\tau^i_j \in \widetilde{\calG}^i} |\tau^i_j|}
\sum_{i=1}^M \sum_{\tau^i_j \in \widetilde{\calG}^i}
\calL(\theta; \tau^i_j)
\label{eq:hapo-objective}
\end{equation}
where
\begin{equation}
\mathcal{L}(\theta; \tau^i_j)
=
\begin{cases}
\displaystyle
\sum_{t=1}^{|\tau^i_j|}
\mathrm{CLIP}\!\left(r^i_{j,t}(\theta), A^i_j, \epsilon\right)
& \tau^i_j \neq \tau_i^\ast, \\[1.2em]
\displaystyle
\lambda(c_i)
\sum_{t=1}^{|\tau_\ast^i|}
\calF( \pi_\theta\!\left(\tau_{\ast,t}^i \mid s_0^i, \tau_{\ast,<t}^i\right))
& \tau^i_j = \tau_\ast^i
\end{cases}
\label{eq:hapo-loss}
\end{equation}

The coefficient $\lambda(c_i)$ modulates the influence of the teacher trajectory according to the posterior confidence. Taking $\lambda$ to be a monotone decreasing function of $c_i$ makes teacher forcing strongest on low-confidence groups and weakest near the gating boundary. The simplest choice is the indicator $\lambda(c_i) = \mathbb{I}[c_i < \gamma_t]$, under which the teacher term is active whenever the gate is open. The policy shaping operator $\calF$ reshapes the distribution over tokens and determines the scaling of the gradient weights. Common operators and their gradient weights are summarized in Appendix~\ref{sec:operators}.

\section{Experiments}

In this section, we present our experimental setup, main results, and ablation study. For full experiment details and supplementary ablation studies, refer to Appendix~\ref{sec:exp_details} and \ref{sec:ablation}.

We evaluate HAPO as an extension layer for mixed-policy post-training in mathematical reasoning. Our empirical study examines:
(1) whether HAPO improves representative mixed-policy baselines;
(2) whether gated, cold-start-targeted injection outperforms always-on injection; and
(3) whether annealing is necessary for training stability.

\begin{figure*}[t]
    \centering
    \includegraphics[width=0.8\textwidth]{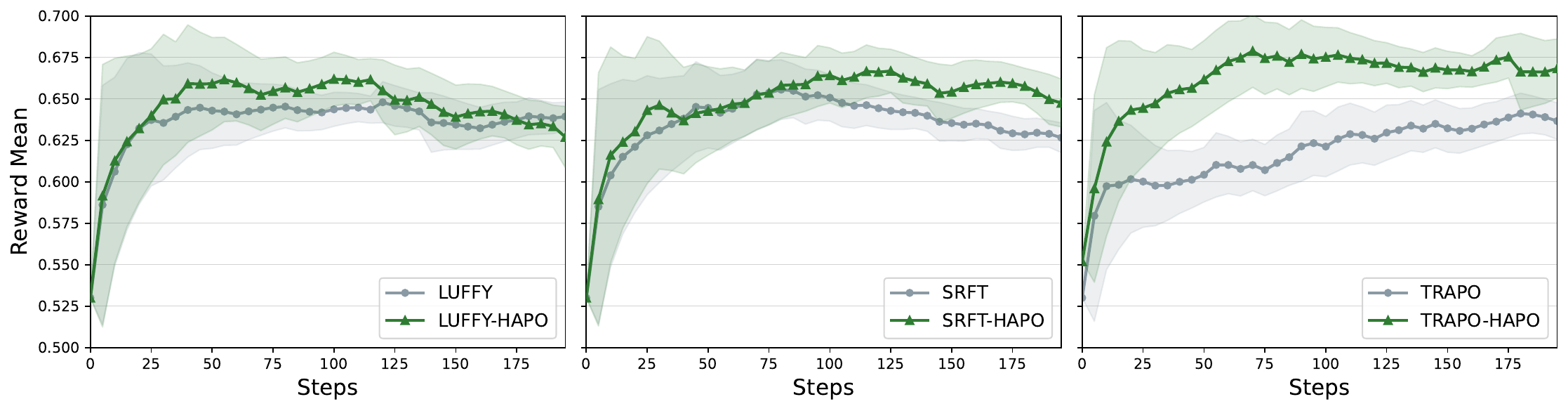}
    \caption{Validation reward dynamics comparing HAPO with baseline methods in cold-start phase. From left to right: LUFFY, SRFT, and TRAPO. Each subplot shows the average reward on the validation set across training steps, with HAPO variants (green) compared against their respective baselines (gray). Additional results can be found in Appendix~\ref{sec:ablation}.}
    \label{fig:training_dynamics}
\end{figure*}

\subsection{Experimental Setup}

\paragraph{Training Setup.} We conduct experiments on the MATH dataset~\citep{DBLP:conf/nips/HendrycksBKABTS21}, using both training and validation splits from difficulty levels 3-5 following~\citet{zeng2025simplerlzooinvestigatingtamingzero}. Teacher trajectories in training split are generated using gpt-oss-120b~\citep{openai2025gptoss120bgptoss20bmodel}. Following prior studies~\citep{zeng2025simplerlzooinvestigatingtamingzero,liu2025uftunifyingsupervisedreinforcement}, we use Qwen2.5-3B~\citep{qwen2025qwen25technicalreport} as our base model and GRPO~\citep{shao2024deepseekmath} excluding the KL penalty term~\citep{liu2025understanding} as our RL algorithm. For all RL experiments (GRPO, LUFFY, SRFT, and TRAPO), we use a batch size of 128, constant learning rate of $5 \times 10^{-6}$, trajectory generation temperature of 1.0, group size of 8, and maximum generation length of 8{,}192 tokens. All remaining hyperparameters follow established baselines~\citep{yan2025learningreasonoffpolicyguidance,fu2025srftsinglestagemethodsupervised,su2025trustregionadaptivepolicyoptimization} and are kept consistent across experiments. For HAPO variants, we maintain the same base hyperparameters but vary three components: (1) whether to apply the SSI operator $\calT$ from Eq.~(\ref{eq:transform}), (2) the gate, either the derived tuning-free $\gamma_0$ of Eq.~(\ref{eq:gamma0-canonical}) ($m=0$) or a fixed $\gamma_0\in\{0.3,0.5,0.7\}$ for the sensitivity study, and (3) whether to apply threshold annealing $\gamma_t$ from Eq.~(\ref{eq:annealed-threshold}).

\paragraph{Evaluation Setup.} For evaluation, we follow~\citet{yan2025learningreasonoffpolicyguidance} and employ the Math-verify\footnote{\url{https://github.com/huggingface/Math-Verify}} and OAT-Grader~\citep{liu2024oat} frameworks with temperature 0.6 and maximum generation length of 8{,}192 tokens. We assess our approach on six mathematical reasoning benchmarks: MATH-500~\citep{DBLP:conf/nips/HendrycksBKABTS21}, OlympiadBench~\citep{DBLP:conf/acl/HeLBHTSHHHZLQL024}, Minerva~\citep{DBLP:conf/nips/LewkowyczADDMRS22}, AIME2024, AIME2025, and AMC~\citep{li2024numinamath}. Following standard evaluation protocols, we report avg@32 for AIME 2024, AIME 2025 and AMC due to its limited test samples, while using pass@1 for the other three benchmarks.

\paragraph{Baseline Comparison.}
We compare against several baseline methods: (1) GRPO~\citep{shao2024deepseekmath}, (2) SFT, which directly trains the model to imitate teacher trajectories; (3) SFT-GRPO, following the standard two-stage pipeline where SFT precedes RL; and three recent methods that uniformly replace one trajectory per group with an teacher trajectory: (4) SRFT~\citep{fu2025srftsinglestagemethodsupervised}, which uses SFT token loss; (5) LUFFY~\citep{yan2025learningreasonoffpolicyguidance}, which incorporates policy shaping with SFT token loss; and (6) TRAPO~\citep{su2025trustregionadaptivepolicyoptimization}, which incorporates trust-region policy shaping with SFT token loss. To demonstrate the effectiveness of HAPO, we apply it on top of LUFFY, SRFT, and TRAPO, creating LUFFY-HAPO, SRFT-HAPO, and TRAPO-HAPO variants.

\subsection{Main Results}

\paragraph{Mathematical Reasoning Performance.}
Table~\ref{tab:main_results} reports results at two scales, Qwen2.5-3B and Qwen2.5-7B. At the 3B scale, TRAPO-HAPO achieves the strongest overall performance with an average score of \textbf{28.90}, and HAPO improves every base method: LUFFY-HAPO over LUFFY by \textbf{+1.51}, SRFT-HAPO over SRFT by \textbf{+1.08}, and TRAPO-HAPO over TRAPO by \textbf{+4.03}. The same pattern holds at the 7B scale, where HAPO improves all three base methods (LUFFY \textbf{+0.87}, SRFT \textbf{+4.52}, TRAPO \textbf{+2.71}), indicating that the benefits of adaptive teacher intervention do not vanish with a stronger base model. These results confirm that HAPO's adaptive integration of teacher trajectories leads to more effective reasoning skill acquisition than uniform application strategies.

\paragraph{Training Dynamics.}
Figure~\ref{fig:training_dynamics} shows that HAPO variants consistently achieve higher reward means than their respective baselines in cold-start phase. Notably, the performance gap between HAPO and baseline methods widens over the course of training, suggesting that HAPO's adaptive integration of teacher trajectories becomes increasingly beneficial as the model learns. This trend is most pronounced in the TRAPO-HAPO setting, as is also reflected in Table~\ref{tab:main_results}.

\ablationtable

\subsection{Ablation Study}

Table~\ref{tab:ablation} decomposes HAPO into its three components, using TRAPO as the base method and restricting training to the cold-start stage ($<200$ steps). The ablation is cumulative. The row \textit{+ Hindsight} applies the SSI operator of Eq.~(\ref{eq:transform}) to every group, without gating. The row \textit{+ Gating} restricts injection to groups whose confidence score falls below the threshold of Eq.~(\ref{eq:gamma0-canonical}). The row \textit{+ Annealing} adds the schedule of Eq.~(\ref{eq:annealed-threshold}) and yields the full framework.

Two observations follow. First, SSI alone does not improve on the base method, reducing average accuracy from 24.87 to 23.80. Injecting a teacher trajectory into every group therefore supplies guidance on prompts the policy already solves and the replacement discards a useful rollout. Second, gating recovers this loss and improves substantially on both the base method and ungated injection, reaching 28.90 and attaining the highest score on five of the six benchmarks.

Annealing leaves average accuracy essentially unchanged within this budget, at 27.64 against 28.90. Over the cold-start stage the gated runs have not accumulated enough off-policy drift for the schedule to affect benchmark scores, and accuracy is not the quantity annealing controls. Its role is to bound the off-policy horizon and preserve stability over the global horizon, a discussion we isolate in Section~\ref{sec:stability}.

\subsection{Annealing Mitigates Off-Policy Collapse}
\label{sec:stability}

Gating determines which groups receive off-policy gradient, but not for how long. On prompts the policy never solves, the gate remains open for the entire run. Fig.~\ref{fig:stability} reports the cost of this residual exposure when training is extended over the global horizon without annealing. Policy entropy and the KL divergence to the initial policy grow monotonically rather than plateauing, and beyond the point at which they diverge, training collapses and does not recover. This is the mechanism identified in Section~\ref{sec:annealing}. With annealing, the injected fraction is driven to zero at $e_{\mathrm{off}}$, entropy and KL divergence stabilize, and reward continues to improve. We therefore regard annealing as an essential stability measure for mixed-policy training, rather than as a component for improving accuracy.

\begin{figure}[!ht]
    \centering
    \includegraphics[width=\columnwidth]{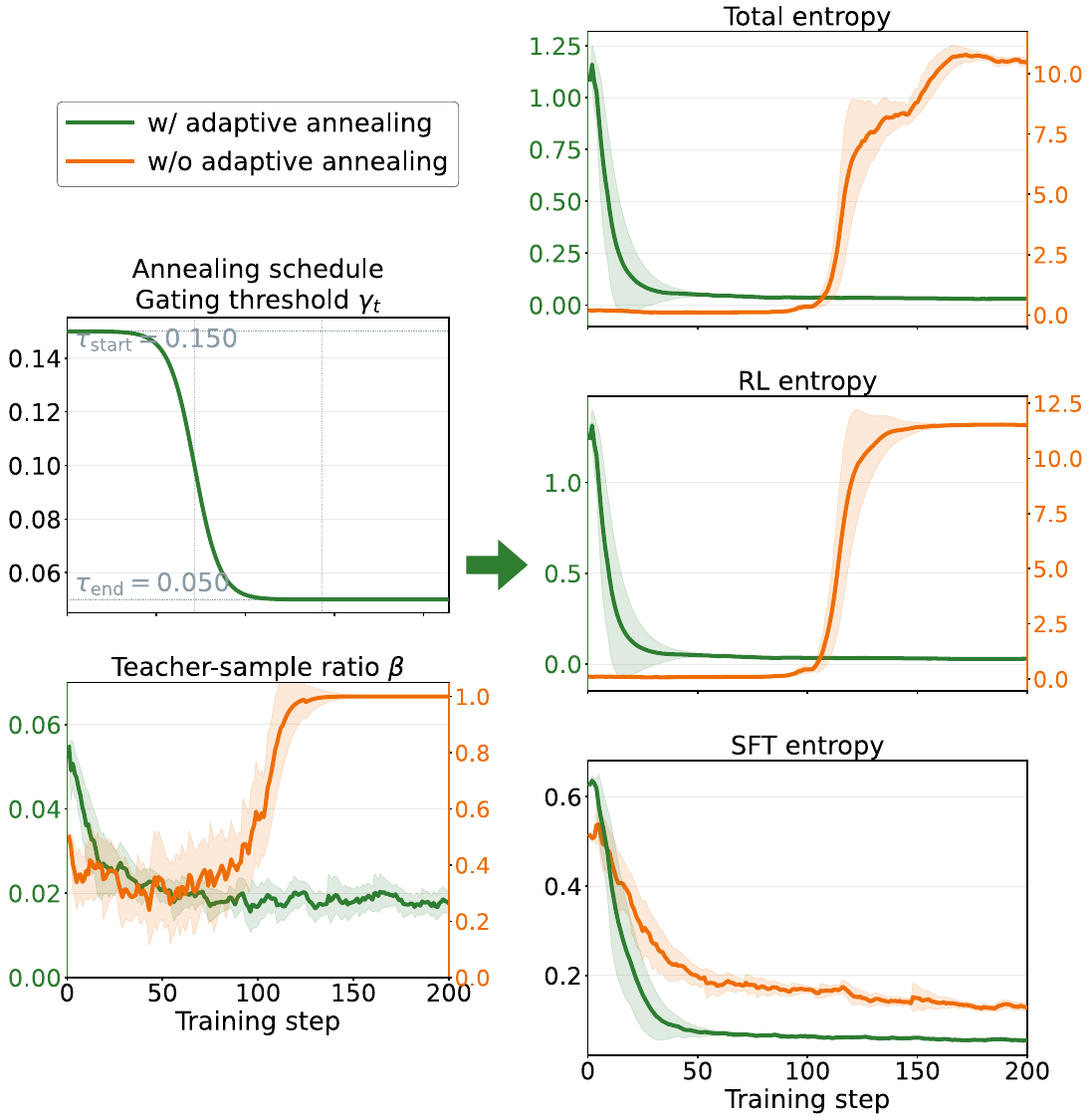}
    \caption{Adaptive annealing prevents training collapse.
  \textbf{Left:} The gating threshold $\gamma_t$ decays sigmoidally from $\tau_{\mathrm{start}}$ to $\tau_{\mathrm{end}}$, progressively withdrawing teacher support. Accordingly, the teacher-sample fraction $\beta$ remains low under adaptive annealing (green), but rises toward one without annealing as a sign of potential training collapse (orange).
  \textbf{Right:} Without annealing (orange), total and RL entropy explode after $\sim 100$ steps. With annealing (green), all entropy components stabilize as teacher intervention is progressively withdrawn.}
  \label{fig:stability}

\end{figure}

\section{Conclusion}
In this paper, we propose HAPO, a mixed-policy training framework that dynamically adapts teacher intervention according to the policy’s current competence, enabling teacher guidance to serve as temporary and self-retracting support during learning. Through extensive experiments, we demonstrate that teacher guidance can effectively improve learning in sparse-reward settings without becoming a persistent component throughout the entire training process. Furthermore, we show that HAPO can be integrated with existing mixed-policy approaches, improving their average performance while maintaining training stability.
For future work, we plan to evaluate HAPO with larger-scale models and datasets, including more diverse reasoning benchmarks, to further assess its scalability and generalization capability. We also aim to investigate the relationship between mixed-policy training and on-policy distillation, exploring whether adaptive teacher intervention can provide complementary benefits in distillation-based training.

\paragraph{Reproducibility.}The six public datasets are standard and cited; SSI (§4.1), Beta-Binomial Confidence Gating(§4.2), the adaptive annealing objective (§4.4) are provided. Theoretical details are included in the Appendix.

\paragraph{Limitations.} Our evaluation is confined to mathematical reasoning with verifier-based binary rewards and GRPO-style group estimators, on which the gate as formulated relies. Results are from single runs per configuration, so per-benchmark differences of about a point should be read as noise, and HAPO does regress on individual benchmarks even where the average improves. Finally, $e_{\mathrm{off}}$ trades cold-start support against off-policy exposure and remains epoch-dependent, deriving it from observed training signal is a natural next step.

\bibliography{aaai2027}
\clearpage
\appendix
\onecolumn

\section{Theoretical Analysis}
\label{sec:exp_theories}

This appendix supplies the material that Section~4 refers to without developing. We first decompose the HAPO update into an on-policy component and a teacher component, and define the off-policy mass of a training step. We then establish the two pathways through which this mass is withdrawn, one driven by the competence of the policy and one enforced by the annealing schedule. We close with a comparison against fixed-schedule mixing.

\paragraph{Gradient decomposition.}
Let $I_i^t = \mathbb{I}[c_i < \gamma_t]$ denote the injection indicator for group $i$ at step $t$. Up to the token normalizer, the update induced by Eq.~(\ref{eq:hapo-objective}) is
\begin{equation}
\widehat{g}_t(\theta) = \frac{1}{M}\sum_{i=1}^M \bigl(I_i^t\,\widehat{g}_{i,\mathrm{teach}}(\theta)
+(1-I_i^t)\,\widehat{g}_{i,\mathrm{RL}}(\theta)\bigr),
\label{eq:hapo-gradient-estimator}
\end{equation}
where $\widehat{g}_{i,\mathrm{teach}}$ is the shaped teacher term of Eq.~(\ref{eq:hapo-loss}) and $\widehat{g}_{i,\mathrm{RL}}$ is the on-policy GRPO term. We define the off-policy mass of a step as $\beta_t=\tfrac{1}{M}\sum_{i=1}^M I_i^t$, the fraction of injected groups in that step.

\paragraph{Exogenous withdrawal through annealing.}
The annealing schedule of Eq.~(\ref{eq:annealed-threshold}) removes the teacher component after a finite number of epochs, determined in closed form by the group size $N$, the steepness $k$, the failure level $m$, and the turn-off epoch $e_{\mathrm{off}}$.

\begin{proposition}[Finite extinction]
\label{prop:extinct}
Under Eq.~(\ref{eq:annealed-threshold}) with $\gamma_0$ given by Eq.~(\ref{eq:gamma0-canonical}), define $e^\star = e_{\mathrm{off}} + \tfrac{1}{k}\log(2m+1)$. For all $e_t > e^\star$ we have $\gamma_t < \tfrac{1}{2+N} \le c_i$ for every group $i$, hence $I_i^t = 0$ and $\widehat{g}_t(\theta) = \tfrac{1}{M}\sum_{i=1}^M \widehat{g}_{i,\mathrm{RL}}(\theta)$.
\end{proposition}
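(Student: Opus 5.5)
The plan is to reduce the claim to one explicit inequality in the sigmoid argument and then solve it. Everything is in closed form, so the argument is a direct computation.

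First I would write the gap between the threshold and the floor explicitly. With $\gamma_0$ from Eq.~(\ref{eq:gamma0-canonical}) and $\gamma_\infty=\tfrac{1}{2(2+N)}$,
\[
\gamma_0-\gamma_\infty=\frac{m+\tfrac32-\tfrac12}{2+N}=\frac{m+1}{2+N}.
\]
Writing $x=k(e_t-e_{\mathrm{off}})$ and using $1-\sigma(x)=\tfrac{1}{1+e^{x}}$, Eq.~(\ref{eq:annealed-threshold}) becomes
\[
\gamma_t=\frac{1}{2+N}\Bigl(\tfrac12+\frac{m+1}{1+e^{x}}\Bigr).
\]

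Second, I would impose the target inequality $\gamma_t<\tfrac{1}{2+N}$. After cancelling the positive factor $\tfrac{1}{2+N}$, it reads
\[
\frac{m+1}{1+e^{x}}<\tfrac12,
\qquad\text{i.e.}\qquad
e^{x}>2m+1.
\]
This is equivalent to $x>\log(2m+1)$. Since $k>0$, that is in turn equivalent to
\[
e_t>e_{\mathrm{off}}+\tfrac1k\log(2m+1)=e^\star.
\]
Every step here is an equivalence, because $1+e^{x}>0$ and $\log$ is monotone. So the threshold $e^\star$ is exact rather than just sufficient. It also reduces to $e^\star=e_{\mathrm{off}}$ in the default case $m=0$. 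The main thing to check carefully is the bookkeeping of $\gamma_0-\gamma_\infty$; an off-by-$\tfrac12$ error there would shift the logarithm's argument.

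Third, I would close the argument using the confidence ladder Eq.~(\ref{eq:confidence-ladder}). Since $S_i\ge 0$, every group has $c_i\ge\tfrac{1}{2+N}>\gamma_t$. Hence $I_i^t=\mathbb{I}[c_i<\gamma_t]=0$ for all $i$, and substituting into Eq.~(\ref{eq:hapo-gradient-estimator}) leaves only the RL terms,
\[
\widehat g_t(\theta)=\tfrac1M\sum_{i=1}^M \widehat g_{i,\mathrm{RL}}(\theta).
\]
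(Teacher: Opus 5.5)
Your proof is correct and matches the paper's argument step for step. The paper likewise uses $\gamma_0-\gamma_\infty=\tfrac{m+1}{2+N}$ to reduce $\gamma_t<\tfrac{1}{2+N}$ to $\sigma(-z)<\tfrac{1}{2m+2}$, which is your $e^{x}>2m+1$. It then finishes with the bottom rung $c_i\ge\tfrac{1}{2+N}$ of the confidence ladder.
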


\begin{proof}
Write $z = k(e_t - e_{\mathrm{off}})$ and use $1-\sigma(z)=\sigma(-z)$. The inequality $\gamma_t < \tfrac{1}{2+N}$ is equivalent to $(\gamma_0-\gamma_\infty)\,\sigma(-z) < \tfrac{1}{2+N}-\gamma_\infty$. Substituting $\gamma_0-\gamma_\infty = \tfrac{m+1}{2+N}$ and $\tfrac{1}{2+N}-\gamma_\infty = \tfrac{1}{2(2+N)}$ gives $\sigma(-z) < \tfrac{1}{2m+2}$, which holds if and only if $z > \log(2m+1)$, that is, $e_t > e^\star$. The bound $c_i \ge \tfrac{1}{2+N}$ is the smallest rung of Eq.~(\ref{eq:confidence-ladder}).
\end{proof}

Under the default $m=0$ we have $e^\star = e_{\mathrm{off}}$. The conclusion holds for any prompt distribution and does not require the policy to solve any particular prompt, so the total off-policy exposure of a run is bounded by construction. For $e_t > e^\star$ the objective of Eq.~(\ref{eq:hapo-objective}) coincides with the on-policy objective of Eq.~(\ref{eq:grpo}).

\paragraph{Endogenous withdrawal through gating.}
The gate also closes on its own as the policy improves, although only in the probabilistic sense stated below. Fix the threshold at $\gamma_t\equiv\gamma$ with $\gamma\le\tfrac{1}{2}$, which the thresholds of Eq.~(\ref{eq:gamma0-canonical}) satisfy for the group sizes considered here, and let $\mu$ denote the success probability of the policy on a given prompt.

\begin{proposition}[Endogenous decay]
\label{prop:endogenous}
If $\mu>\gamma$, the probability that the gate opens on this prompt satisfies
\begin{equation}
\mathbb{P}(c<\gamma)\le\exp\bigl(-2N(\mu-\gamma)^2\bigr).
\label{eq:endogenous-vanishing}
\end{equation}
\end{proposition}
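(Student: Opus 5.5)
The plan is to reduce the gate event $\{c<\gamma\}$ to a lower-tail event for the empirical success rate $S/N$, and then apply Hoeffding's inequality. As in the Beta--Binomial model of Eq.~(\ref{eq:bayes_confidence_score}), I take the $N$ rollouts for the prompt to be independent Bernoulli$(\mu)$ trials. Then $S=\sum_{j=1}^N \mathcal{R}(\tau_j)$ is a sum of independent $[0,1]$-valued variables with mean $N\mu$, and $c=\tfrac{1+S}{2+N}$.

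First I rewrite the gate condition. The event $c<\gamma$ is equivalent to $1+S<\gamma(2+N)$, that is,
\begin{equation*}
\frac{S}{N} \;<\; \gamma + \frac{2\gamma-1}{N}.
\end{equation*}
This is where the hypothesis $\gamma\le\tfrac12$ is used. It gives $2\gamma-1\le 0$, so the right-hand side is at most $\gamma$, and therefore
\begin{equation*}
\{c<\gamma\}\subseteq\Bigl\{\tfrac{S}{N}<\gamma\Bigr\}=\Bigl\{\tfrac{S}{N}-\mu<-(\mu-\gamma)\Bigr\}.
\end{equation*}
Because $\mu>\gamma$, the deviation $t=\mu-\gamma$ is strictly positive.

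Next I apply the one-sided Hoeffding inequality for the mean of $N$ independent $[0,1]$ variables:
\begin{equation*}
\mathbb{P}\bigl(\tfrac{S}{N}-\mu\le -t\bigr)\le \exp(-2Nt^2).
\end{equation*}
Combined with the inclusion above, this gives $\mathbb{P}(c<\gamma)\le\exp\bigl(-2N(\mu-\gamma)^2\bigr)$, which is Eq.~(\ref{eq:endogenous-vanishing}). Replacing the strict inequality by a non-strict one only enlarges the event, so the bound still holds.

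None of these steps is a real obstacle. The only point needing care is the direction of the prior-induced shift $\tfrac{2\gamma-1}{N}$. Without $\gamma\le\tfrac12$ the shift would be positive, and the bound would have to be stated with $\gamma$ replaced by $\gamma+\tfrac{2\gamma-1}{N}$. I will also remark that $\mu$ is the success probability under the policy $\pi_{\theta_{\mathrm{old}}}$ that generates the group at that step, so the statement holds conditionally on the current parameters.
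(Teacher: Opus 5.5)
Your proof is correct and takes essentially the same route as the paper. You rewrite $\{c<\gamma\}$ as a lower-tail event for $S\sim\mathrm{Bin}(N,\mu)$, use $\gamma\le\tfrac12$ to absorb the prior-induced shift $2\gamma-1$, and apply the one-sided Hoeffding bound. The only cosmetic difference is where the shift is discarded: the paper keeps the larger deviation $t=N(\mu-\gamma)+(1-2\gamma)$ and drops the extra term inside the exponent, whereas you drop it first by enlarging the event to $\{S/N<\gamma\}$.
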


\begin{proof}
The success count satisfies $S\sim\mathrm{Bin}(N,\mu)$, and $c=(1+S)/(2+N)$ is increasing in $S$, so the event $\{c<\gamma\}$ coincides with $\{S<(2+N)\gamma-1\}$. Writing $(2+N)\gamma-1 = N\mu - t$ with $t = N(\mu-\gamma)+(1-2\gamma)$, the assumptions $\mu>\gamma$ and $\gamma\le\tfrac{1}{2}$ give $t\ge N(\mu-\gamma)>0$. Hoeffding's inequality applied to the lower tail of $S$ then yields $\mathbb{P}(S\le N\mu-t)\le\exp(-2t^2/N)\le\exp\bigl(-2N(\mu-\gamma)^2\bigr)$.
\end{proof}

Injection therefore becomes exponentially rare on a prompt once the success probability of the policy exceeds the threshold, which accounts for the decay of $\beta_t$ observed in Fig.~\ref{fig:ablation-merged}. Eq.~(\ref{eq:endogenous-vanishing}) describes a decay rather than a termination, since prompts with $\mu\le\gamma$ are not excluded by the gate alone. Proposition~\ref{prop:extinct} supplies the termination that this bound cannot.

\paragraph{Contrast with a fixed schedule.}
A fixed-schedule mixed-policy method optimizes $J_{\mathrm{RL}}+\lambda J_{\mathrm{SFT}}$ with $\lambda>0$ held constant, so its off-policy mass satisfies $\beta_t\equiv 1$ for the entire run. Whenever the teacher differs from the optimum of $J_{\mathrm{RL}}$, the stationary points of this objective satisfy $\nabla J_{\mathrm{RL}}=-\lambda\nabla J_{\mathrm{SFT}}$ rather than $\nabla J_{\mathrm{RL}}=0$. Under HAPO, $\beta_t$ reaches zero within a finite and pre-specified number of epochs, and the two objectives coincide for $e_t>e^\star$. We state this as a property of the design.

\clearpage
\section{Policy Shaping Operators}
\label{sec:operators}

This section details the policy shaping operators $\calF$ used by the three mixed-policy base methods that HAPO augments in our experiments. Table~\ref{tab:gradient-comparison} summarizes each operator and its associated gradient weight, and Fig.~\ref{fig:gradient-comparison} visualizes how these gradient magnitudes depend on the policy probability $\pi_\theta$.

\begin{figure}[h!]
    \centering
    \includegraphics[width=0.45\columnwidth]{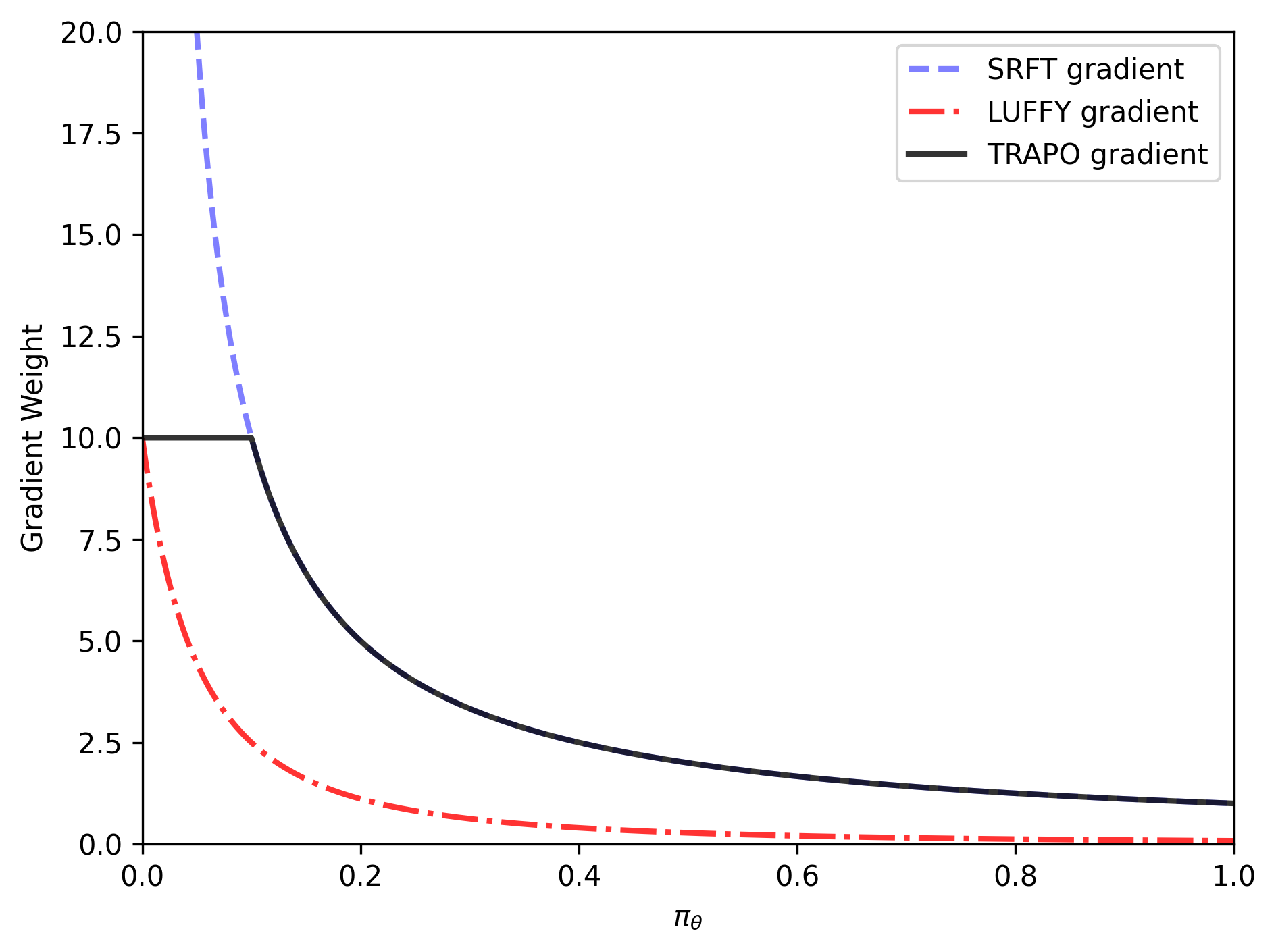}
    \caption{Comparison of gradient magnitudes for LUFFY, SRFT, and TRAPO methods as a function of policy probability $\pi_\theta$. SRFT uses inverse probability weighting, LUFFY applies smoothing to prevent gradient explosion, and TRAPO uses a threshold-based approach.}
    \label{fig:gradient-comparison}
\end{figure}

\gradientcomparisontable

\section{Experimental Details}
\label{sec:exp_details}

In this section, we provide training details not mentioned in the main text.

\paragraph{Base Model.}
Our main experiments are conducted using Qwen2.5-3B~\citep{qwen2025qwen25technicalreport}, a general-purpose model.

\paragraph{Dataset.}
Our data source is the MATH dataset~\citep{DBLP:conf/nips/HendrycksBKABTS21}, specifically problems from difficulty levels 3-5 following~\citet{zeng2025simplerlzooinvestigatingtamingzero}. Teacher trajectories are generated using gpt-oss-120b~\citep{openai2025gptoss120bgptoss20bmodel}. We filter the dataset to retain only problems for which the teacher model generates a correct answer, resulting in 8k prompts. The generation prompt follows the DeepSeek-R1~\citep{shao2024deepseekmath} style without explicit \texttt{<think>} and \texttt{</think>} token delimiters. The training prompt follows the format from~\citet{zeng2025simplerl}. The prompt templates used for generation and training are presented below.

\paragraph{Prompt Templates.} Here we show prompt templates for teacher trajectory generation and training as below. 

\begin{tcolorbox}[
    arc=0mm,
    boxrule=1pt,
    colback=blue!6!white,
    colframe=black,
    colbacktitle=black,
    title=Prompt Template for Teacher Trajectory Generation,
    boxed title style={boxrule=0pt,colframe=white}
]
You will be given a problem and its answer. Generate step-by-step reasoning that leads to this answer.

Put your final answer within \textbackslash boxed\{\}. The answer in \textbackslash boxed\{\} must be exactly: \{answer\}

Problem: \{problem\}

Provide your step-by-step solution:
\end{tcolorbox}

\begin{tcolorbox}[
    arc=0mm,
    boxrule=1pt,
    colback=blue!6!white,
    colframe=black,
    colbacktitle=black,
    title=Prompt Template for Training,
    boxed title style={boxrule=0pt,colframe=white}
]
\texttt{<|}im\_start\texttt{|>}system \\
You are a helpful assistant. \\
\texttt{<|}im\_end\texttt{|>} \\

\texttt{<|}im\_start\texttt{|>}user \\
\{problem\}

Please reason step by step, and put your final answer within \textbackslash boxed\{\}. \\
\texttt{<|}im\_end\texttt{|>} \\

\texttt{<|}im\_start\texttt{|>}assistant \\
\{output\}
\end{tcolorbox}

\paragraph{SFT.}
For all SFT methods, we train on 8k prompt-trajectory pairs for 2 epochs. We use a maximum response length of 8{,}192, learning rate of $5 \times 10^{-5}$ with a warmup ratio of 0.1, and batch size of 32. The remaining hyperparameters follow~\citet{yan2025learningreasonoffpolicyguidance}. All SFT training is conducted using the VERL framework~\citep{Sheng_2025}.

\paragraph{GRPO.}
    All GRPO-based methods use Dr.GRPO~\citep{liu2025understanding}, a variant that removes length normalization and standard deviation regularization in advantage calculation. We use a batch size of 128, 8 rollouts per group, maximum prompt length of 1{,}024, maximum response length of 8{,}192, and a constant learning rate of $5 \times 10^{-6}$. Rollouts during training are generated using vLLM~\citep{kwon2023efficientmemorymanagementlarge} with sampling temperature 1.0. All RL training is conducted using the VERL framework~\citep{Sheng_2025}.

\paragraph{SRFT.} For SRFT, we follow the original implementation~\citep{fu2025srftsinglestagemethodsupervised} but set the entropy-adaptive weighting mechanism to 0 for fair comparison, using policy shaping operator $\calF_{SRFT}$ for the SFT token loss on teacher trajectories.

\paragraph{LUFFY.} For LUFFY, we follow the original implementation~\citep{yan2025learningreasonoffpolicyguidance} using the policy shaping operator $\calF_{LUFFY}$ for the SFT token loss on teacher trajectories.

\paragraph{TRAPO.} For TRAPO, we follow the original implementation~\citep{su2025trustregionadaptivepolicyoptimization} but disable prefix selection from offline teacher trajectories for fair comparison, using policy shaping operator $\calF_{TRAPO}$ for the SFT token loss on teacher trajectories.

\paragraph{HAPO.} HAPO is applied on top of the base SRFT, LUFFY, and TRAPO methods. We maintain the same base hyperparameters but vary three components: (1) whether to apply the SSI operator $\mathcal{T}$ from Eq.~(\ref{eq:transform}), (2) the initial threshold value $\gamma_0 \in \{0.3, 0.5, 0.7\}$, and (3) whether to apply threshold annealing $\gamma_t$ from Eq.~(\ref{eq:annealed-threshold}).

\paragraph{Evaluation.}
For evaluation, we follow~\citet{yan2025learningreasonoffpolicyguidance} and employ the Math-verify\footnote{\url{https://github.com/huggingface/Math-Verify}} and OAT-Grader~\citep{liu2024oat} frameworks with temperature 0.6 and maximum generation length of 8{,}192 tokens. We assess our approach on six mathematical reasoning benchmarks: MATH-500~\citep{DBLP:conf/nips/HendrycksBKABTS21}, OlympiadBench~\citep{DBLP:conf/acl/HeLBHTSHHHZLQL024}, Minerva~\citep{DBLP:conf/nips/LewkowyczADDMRS22}, AIME2024, AIME2025, and AMC~\citep{li2024numinamath}. Following standard evaluation protocols, we report avg@32 for AIME 2024, AIME 2025 and AMC due to its limited test samples, while using pass@1 for the other three benchmarks.

\section{Hyperparameter Study}
\label{sec:ablation}

In this section, we study the effect of the initial gating threshold $\gamma_0$ on HAPO training dynamics. Since HAPO is established as the best configuration in Table~\ref{tab:ablation}, we fix that configuration and vary $\gamma_0 \in \{0.3, 0.5, 0.7\}$. Figure~\ref{fig:ablation-merged} reports three complementary signals across all three base methods: the teacher-injection fraction $\beta$, validation reward, and response length.

\paragraph{Teacher intervention retracts during training.} 
Fig.~\ref{fig:ablation-merged} (top) shows that $\beta$ decays from $\sim 0.7$--$0.9$ at initialization to $\sim 0.2$--$0.4$ by step 150 in every configuration, with smaller $\gamma_0$ retracting fastest. This is the endogenous decay of the off-policy mass $\beta_t$ analyzed in Appendix~\ref{sec:exp_theories}; the finite-extinction property (Proposition~\ref{prop:extinct}) then drives it to zero.

\paragraph{HAPO with $\gamma_0=0.3$ achieves the best performance.}
Fig.~\ref{fig:ablation-merged} (middle) shows that among HAPO variants, $\gamma_0=0.3$ (darkest green) tracks or exceeds all other settings and the base method across every panel. The largest gap appears under TRAPO with Hindsight and Thompson-Sampling gating at $\gamma_0=0.3$, consistent with the $+4.03$ average accuracy improvement reported in Table~\ref{tab:main_results}.

\paragraph{Reward gains are not driven by longer generations.} 
Fig.~\ref{fig:ablation-merged} (bottom) shows that response length remains within roughly $\pm 10\%$ of the base method across all HAPO configurations, indicating that the reward gains in Fig.~\ref{fig:ablation-merged} are not attributable to longer generations.

\paragraph{Teacher intervention retracts during training.} 
Fig.~\ref{fig:ablation-merged} (top) shows that $\beta$ decays from $\sim 0.7$--$0.9$ at initialization to $\sim 0.2$--$0.4$ by step 150 in every configuration, with smaller $\gamma_0$ retracting fastest. This is the endogenous decay of the off-policy mass $\beta_t$ analyzed in Appendix~\ref{sec:exp_theories}; the finite-extinction property (Proposition~\ref{prop:extinct}) then drives it to zero.

\begin{figure}[tp]
      \centering
      \includegraphics[width=\columnwidth]{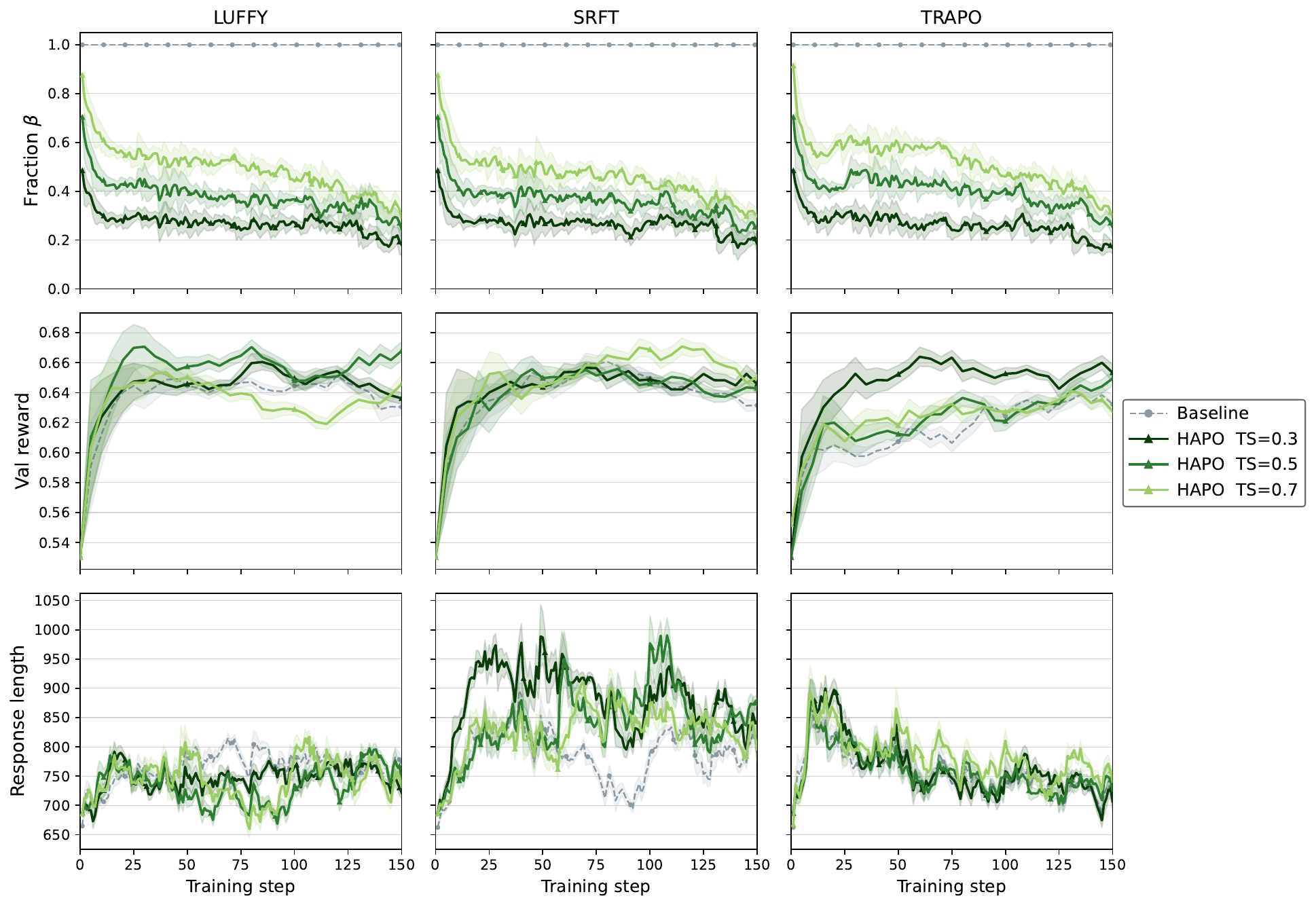}
      \caption{Training dynamics of adding HAPO under varying initial gating thresholds $\gamma_0 \in \{0.3, 0.5, 0.7\}$ (darker green = smaller $\gamma_0$), across LUFFY, SRFT, and TRAPO base methods (columns).
      \textbf{Top:} Fraction $\beta$ of teacher-injected trajectories per step; $\beta$ decays from ${\sim}0.7$--$0.9$ at initialization to ${\sim}0.2$--$0.4$ by step
      150 across all settings, with smaller $\gamma_0$ retracting fastest, empirically confirming the vanishing-intervention property.
      \textbf{Middle:} Validation reward; HAPO variants (solid green) track or exceed the base method (gray dashed), with $\gamma_0{=}0.3$ achieving the largest gains.
      \textbf{Bottom:} Response length remains within roughly ${\pm}10\%$ of the base method across all configurations, indicating that reward gains are not attributable
      to longer generations.}
  \label{fig:ablation-merged}
  \end{figure}

\end{document}

%% file: tables.tex
\newcommand{\mainresultstable}{%
\begin{table*}[!t]
\centering
\small
\setlength{\tabcolsep}{3pt}
\begin{tabular}{lccccccc}
\toprule
Method & Math-500 & Olympiad Bench & Minerva & AIME24 & AMC & AIME25 & Avg Acc \\
\midrule
\multicolumn{8}{l}{\textit{Base model: Qwen2.5-3B}} \\
\midrule
Qwen2.5-3B  & 24.40\nodelta & 11.11\nodelta & 11.76\nodelta & 3.13\nodelta & 14.50\nodelta & 1.56\nodelta & 11.08\nodelta \\
GRPO        & 63.80\nodelta & 25.78\nodelta & 27.21\nodelta & 4.27\nodelta & 31.25\nodelta & 1.67\nodelta & 25.66\nodelta \\
SFT         & 44.40\nodelta & 16.44\nodelta & 11.76\nodelta & 1.25\nodelta & 18.90\nodelta & 0.83\nodelta & 15.59\nodelta \\
SFT-GRPO    & 56.20\nodelta & 23.56\nodelta & 25.37\nodelta & 1.46\nodelta & 28.99\nodelta & 1.67\nodelta & 22.87\nodelta \\
\midrule
LUFFY       & 63.60\nodelta & 26.81\nodelta & 26.80\nodelta & 5.62\nodelta & 32.34\nodelta & 1.67\nodelta & 26.15\nodelta \\
LUFFY-HAPO  & \ttwo{66.60}\deltapct{+3.00} & 28.15\deltapct{+1.34} & 27.94\deltapct{+1.14} & \tone{7.08}\deltapct{+1.46} & 33.28\deltapct{+0.94} & 2.92\deltapct{+1.25} & \ttwo{27.66}\deltapct{+1.51} \\
SRFT        & 61.80\nodelta & 28.44\nodelta & 25.37\nodelta & \ttwo{5.94}\nodelta & 30.50\nodelta & \tone{3.23}\nodelta & 25.88\nodelta \\
SRFT-HAPO   & 64.20\deltapct{+2.40} & \ttwo{28.59}\deltapct{+0.15} & \ttwo{28.70}\deltapct{+3.33} & 4.17\deltapct{-1.77} & \ttwo{33.43}\deltapct{+2.93} & 2.71\deltapct{-0.52} & 26.96\deltapct{+1.08} \\
TRAPO       & 60.60\nodelta & 26.81\nodelta & 24.63\nodelta & 4.17\nodelta & 30.31\nodelta & 2.71\nodelta & 24.87\nodelta \\
TRAPO-HAPO  & \tone{67.00}\deltapct{+6.40} & \tone{31.56}\deltapct{+4.75} & \tone{30.88}\deltapct{+6.25} & 5.62\deltapct{+1.45} & \tone{35.32}\deltapct{+5.01} & \ttwo{3.02}\deltapct{+0.31} & \tone{28.90}\deltapct{+4.03} \\
\midrule
\multicolumn{8}{l}{\textit{Base model: Qwen2.5-7B}} \\
\midrule
Qwen2.5-7B  & 28.40\nodelta & 16.00\nodelta & 5.88\nodelta & 4.90\nodelta & 19.09\nodelta & 1.25\nodelta & 12.59\nodelta \\
GRPO        & 73.60\nodelta & 38.07\nodelta & 29.04\nodelta & 18.44\nodelta & 47.82\nodelta & 5.31\nodelta & 35.38\nodelta \\
SFT         & 45.80\nodelta & 17.93\nodelta & 18.38\nodelta & 3.44\nodelta & 20.44\nodelta & 1.04\nodelta & 17.84\nodelta \\
SFT-GRPO    & 53.60\nodelta & 23.70\nodelta & 23.53\nodelta & 4.06\nodelta & 25.56\nodelta & 3.13\nodelta & 22.26\nodelta \\
\midrule
LUFFY       & \ttwo{76.20}\nodelta & 38.37\nodelta & \tone{34.56}\nodelta & 12.40\nodelta & 46.08\nodelta & 6.67\nodelta & 35.71\nodelta \\
LUFFY-HAPO  & \ 75.80\deltapct{-0.40} & \ttwo{40.74}\deltapct{+2.37} & \ttwo{33.82}\deltapct{-0.74} & 13.23\deltapct{+0.83} & \ttwo{46.95}\deltapct{+0.87} & \ttwo{8.96}\deltapct{+2.29} & \ttwo{36.58}\deltapct{+0.87} \\
SRFT        & 70.60\nodelta & 33.93\nodelta & 32.72\nodelta & \ttwo{13.96}\nodelta & 43.49\nodelta & 7.50\nodelta & 33.70\nodelta \\
SRFT-HAPO   & \tone{77.20}\deltapct{+6.60} & 39.70\deltapct{+5.77} & \tone{34.56}\deltapct{+1.84} & \tone{15.21}\deltapct{+1.25} & \tone{48.72}\deltapct{+5.23} & \tone{13.96}\deltapct{+6.46} & \tone{38.22}\deltapct{+4.52} \\
TRAPO       & 73.20\nodelta & 35.56\nodelta & 29.78\nodelta & 10.42\nodelta & 42.85\nodelta & \ 8.44\nodelta & 33.37\nodelta \\
TRAPO-HAPO  & 74.00\deltapct{+0.80} & \tone{41.93}\deltapct{+6.37} & \tone{34.56}\deltapct{+4.78} & 12.71\deltapct{+2.29} & 45.37\deltapct{+2.52} & 7.92\deltapct{-0.52} & 36.08\deltapct{+2.71} \\
\bottomrule
\end{tabular}
\caption{Main results on mathematical reasoning benchmarks. HAPO is evaluated as an add-on regulator layer on top of three representative mixed-policy post-training methods (LUFFY, SRFT, TRAPO) and across two base-model scales (Qwen2.5-3B and Qwen2.5-7B). At both scales and across all three base methods, HAPO-augmented variants improve average accuracy over their corresponding baselines. Within each base-model block and column, the \colorbox{topone}{\textbf{top-1}} and \colorbox{toptwo}{top-2} results are highlighted; the parenthetical after each HAPO-augmented value is the absolute delta versus its base method.}
\label{tab:main_results}
\end{table*}%
}

\newcommand{\ablationtable}{%
\begin{table*}[t]
\centering
\setlength{\tabcolsep}{3pt}
\small
\begin{tabular}{lccccccc}
\toprule
 & Math-500 & Olympiad Bench & Minerva & AIME24 & AMC & AIME25 & Avg Acc \\
\midrule
TRAPO & 60.60 & 26.81 & 24.63 & 4.17 & 30.31 & 2.71 & 24.87 \\
+ Hindsight & 54.40 & 26.07 & \ttwo{27.94} & 3.33 & 28.84 & 2.19 & 23.80 \\
+ Hindsight + Gating & \tone{67.00} & \tone{31.56} & \tone{30.88} & \tone{5.62} & \tone{35.32} & \ttwo{3.02} & \tone{28.90} \\
+ Hindsight + Gating + Annealing & \ttwo{66.80} & \ttwo{29.63} & 27.57 & \ttwo{5.31} & \ttwo{33.09} & \tone{3.44} & \ttwo{27.64} \\
\bottomrule
\end{tabular}
\caption{Ablation study of HAPO components in cold-start phase. ``+ Hindsight'' corresponds to Synthetic Success Injection (SSI), ``+ Gating'' adds Beta-Binomial confidence gating, and ``+ Annealing'' yields the full HAPO framework. Within each column, the \colorbox{topone}{\textbf{top-1}} and \colorbox{toptwo}{top-2} results are highlighted.}
\label{tab:ablation}
\end{table*}%
}

\newcommand{\gradientcomparisontable}{%
\begin{table}[h!]
\centering
\small
\begin{tabular}{lcc}
\toprule
Method & Operator $\calF$ & Gradient Weight \\
\midrule
LUFFY & $\calF_{LUFFY}(\pi_\theta) = \frac{d}{\pi_\theta + d}$ & $|\nabla_{\theta}\calF_{LUFFY}| = \frac{d}{(\pi_\theta + d)^2}$ \\
SRFT & $\calF_{SRFT}(\pi_\theta) = \log{\pi_\theta}$ & $|\nabla_{\theta}\calF_{SRFT}| = \frac{1}{\pi_\theta}$ \\
TRAPO & $\calF_{TRAPO}(\pi_\theta) = \min\!\left(\frac{1}{\pi_\theta}, \frac{1}{d}\right)$ & $|\nabla_{\theta}\calF_{TRAPO}| = \begin{cases} \frac{1}{d} & \pi_\theta < d \\ \frac{1}{\pi_\theta} & \text{o.w.} \end{cases}$ \\
\bottomrule
\end{tabular}
\caption{Comparison of policy shaping operator $\calF$ and corresponding gradient weights, where $\pi_\theta$ is the policy probability and $d=0.1$ is a hyperparameter for smoothing~\citep{yan2025learningreasonoffpolicyguidance} and thresholding~\citep{su2025trustregionadaptivepolicyoptimization}.}
\label{tab:gradient-comparison}
\end{table}%
}

%% file: aaai2027.bib
@misc{zhang2021hindsighttrustregionpolicy,
      title={Hindsight Trust Region Policy Optimization}, 
      author={Hanbo Zhang and Site Bai and Xuguang Lan and David Hsu and Nanning Zheng},
      year={2021},
      eprint={1907.12439},
      archivePrefix={arXiv},
      primaryClass={cs.LG},
      url={https://arxiv.org/abs/1907.12439}, 
}

@misc{kwon2023efficientmemorymanagementlarge,
      title={Efficient Memory Management for Large Language Model Serving with PagedAttention}, 
      author={Woosuk Kwon and Zhuohan Li and Siyuan Zhuang and Ying Sheng and Lianmin Zheng and Cody Hao Yu and Joseph E. Gonzalez and Hao Zhang and Ion Stoica},
      year={2023},
      eprint={2309.06180},
      archivePrefix={arXiv},
      primaryClass={cs.LG},
      url={https://arxiv.org/abs/2309.06180}, 
}

@inproceedings{Sheng_2025, series={EuroSys ’25},
   title={HybridFlow: A Flexible and Efficient RLHF Framework},
   url={http://dx.doi.org/10.1145/3689031.3696075},
   DOI={10.1145/3689031.3696075},
   booktitle={Proceedings of the Twentieth European Conference on Computer Systems},
   publisher={ACM},
   author={Sheng, Guangming and Zhang, Chi and Ye, Zilingfeng and Wu, Xibin and Zhang, Wang and Zhang, Ru and Peng, Yanghua and Lin, Haibin and Wu, Chuan},
   year={2025},
   month=mar, pages={1279–1297},
   collection={EuroSys ’25} }

@misc{lambert2024tulu,
      title={Tulu 3: Pushing Frontiers in Open Language Model Post-Training}, 
      author={Nathan Lambert and Jacob Morrison and Valentina Pyatkin and Shengyi Huang and Hamish Ivison and Faeze Brahman and Lester James V. Miranda and Alisa Liu and Nouha Dziri and Shane Lyu and Yuling Gu and Saumya Malik and Victoria Graf and Jena D. Hwang and Jiangjiang Yang and Ronan Le Bras and Oyvind Tafjord and Chris Wilhelm and Luca Soldaini and Noah A. Smith and Yizhong Wang and Pradeep Dasigi and Hannaneh Hajishirzi},
      year={2025},
      eprint={2411.15124},
      archivePrefix={arXiv},
      primaryClass={cs.CL},
      url={https://arxiv.org/abs/2411.15124}, 
}

@book{bishop2006pattern,
  author       = {Christopher M. Bishop},
  title        = {Pattern recognition and machine learning, 5th Edition},
  series       = {Information science and statistics},
  publisher    = {Springer},
  year         = {2007},
  url          = {https://www.worldcat.org/oclc/71008143},
  isbn         = {9780387310732},
  bibsource    = {dblp computer science bibliography, https://dblp.org}
}

@misc{yu2025dapoopensourcellmreinforcement,
      title={DAPO: An Open-Source LLM Reinforcement Learning System at Scale}, 
      author={Qiying Yu and Zheng Zhang and Ruofei Zhu and Yufeng Yuan and Xiaochen Zuo and Yu Yue and Weinan Dai and Tiantian Fan and Gaohong Liu and Lingjun Liu and Xin Liu and Haibin Lin and Zhiqi Lin and Bole Ma and Guangming Sheng and Yuxuan Tong and Chi Zhang and Mofan Zhang and Wang Zhang and Hang Zhu and Jinhua Zhu and Jiaze Chen and Jiangjie Chen and Chengyi Wang and Hongli Yu and Yuxuan Song and Xiangpeng Wei and Hao Zhou and Jingjing Liu and Wei-Ying Ma and Ya-Qin Zhang and Lin Yan and Mu Qiao and Yonghui Wu and Mingxuan Wang},
      year={2025},
      eprint={2503.14476},
      archivePrefix={arXiv},
      primaryClass={cs.LG},
      url={https://arxiv.org/abs/2503.14476}, 
}

@misc{murphy2025reinforcementlearningoverview,
      title={Reinforcement Learning: An Overview}, 
      author={Kevin Murphy},
      year={2025},
      eprint={2412.05265},
      archivePrefix={arXiv},
      primaryClass={cs.AI},
      url={https://arxiv.org/abs/2412.05265}, 
}

@book{Sutton1998,
  author       = {Richard S. Sutton and
                  Andrew G. Barto},
  title        = {Reinforcement learning - an introduction, 2nd Edition},
  publisher    = {{MIT} Press},
  year         = {2018},
  url          = {http://www.incompleteideas.net/book/the-book-2nd.html},
  bibsource    = {dblp computer science bibliography, https://dblp.org}
}

@misc{schulman2017proximalpolicyoptimizationalgorithms,
      title={Proximal Policy Optimization Algorithms}, 
      author={John Schulman and Filip Wolski and Prafulla Dhariwal and Alec Radford and Oleg Klimov},
      year={2017},
      eprint={1707.06347},
      archivePrefix={arXiv},
      primaryClass={cs.LG},
      url={https://arxiv.org/abs/1707.06347}, 
}

@misc{andrychowicz2018hindsightexperiencereplay,
      title={Hindsight Experience Replay}, 
      author={Marcin Andrychowicz and Filip Wolski and Alex Ray and Jonas Schneider and Rachel Fong and Peter Welinder and Bob McGrew and Josh Tobin and Pieter Abbeel and Wojciech Zaremba},
      year={2018},
      eprint={1707.01495},
      archivePrefix={arXiv},
      primaryClass={cs.LG},
      url={https://arxiv.org/abs/1707.01495}, 
}

@misc{su2025trustregionadaptivepolicyoptimization,
      title={Trust-Region Adaptive Policy Optimization}, 
      author={Mingyu Su and Jian Guan and Yuxian Gu and Minlie Huang and Hongning Wang},
      year={2025},
      eprint={2512.17636},
      archivePrefix={arXiv},
      primaryClass={cs.LG},
      url={https://arxiv.org/abs/2512.17636}, 
}

@misc{ma2025learningreinforcementlearningcant,
      title={Learning What Reinforcement Learning Can't: Interleaved Online Fine-Tuning for Hardest Questions}, 
      author={Lu Ma and Hao Liang and Meiyi Qiang and Lexiang Tang and Xiaochen Ma and Zhen Hao Wong and Junbo Niu and Chengyu Shen and Runming He and Yanhao Li and Bin Cui and Wentao Zhang},
      year={2025},
      eprint={2506.07527},
      archivePrefix={arXiv},
      primaryClass={cs.AI},
      url={https://arxiv.org/abs/2506.07527}, 
}

@misc{liu2025uftunifyingsupervisedreinforcement,
      title={UFT: Unifying Supervised and Reinforcement Fine-Tuning}, 
      author={Mingyang Liu and Gabriele Farina and Asuman Ozdaglar},
      year={2025},
      eprint={2505.16984},
      archivePrefix={arXiv},
      primaryClass={cs.LG},
      url={https://arxiv.org/abs/2505.16984}, 
}

@misc{fu2025srftsinglestagemethodsupervised,
      title={SRFT: A Single-Stage Method with Supervised and Reinforcement Fine-Tuning for Reasoning}, 
      author={Yuqian Fu and Tinghong Chen and Jiajun Chai and Xihuai Wang and Songjun Tu and Guojun Yin and Wei Lin and Qichao Zhang and Yuanheng Zhu and Dongbin Zhao},
      year={2025},
      eprint={2506.19767},
      archivePrefix={arXiv},
      primaryClass={cs.CL},
      url={https://arxiv.org/abs/2506.19767}, 
}

@misc{yan2025learningreasonoffpolicyguidance,
      title={Learning to Reason under Off-Policy Guidance}, 
      author={Jianhao Yan and Yafu Li and Zican Hu and Zhi Wang and Ganqu Cui and Xiaoye Qu and Yu Cheng and Yue Zhang},
      year={2025},
      eprint={2504.14945},
      archivePrefix={arXiv},
      primaryClass={cs.LG},
      url={https://arxiv.org/abs/2504.14945}, 
}

@misc{lv2026unifiedviewlargelanguage,
      title={Towards a Unified View of Large Language Model Post-Training}, 
      author={Xingtai Lv and Yuxin Zuo and Youbang Sun and Hongyi Liu and Yuntian Wei and Zhekai Chen and Xuekai Zhu and Kaiyan Zhang and Bingning Wang and Ning Ding and Bowen Zhou},
      year={2026},
      eprint={2509.04419},
      archivePrefix={arXiv},
      primaryClass={cs.LG},
      url={https://arxiv.org/abs/2509.04419}, 
}

@misc{zhang2025onpolicyrlmeetsoffpolicy,
      title={On-Policy RL Meets Off-Policy Experts: Harmonizing Supervised Fine-Tuning and Reinforcement Learning via Dynamic Weighting}, 
      author={Wenhao Zhang and Yuexiang Xie and Yuchang Sun and Yanxi Chen and Guoyin Wang and Yaliang Li and Bolin Ding and Jingren Zhou},
      year={2025},
      eprint={2508.11408},
      archivePrefix={arXiv},
      primaryClass={cs.LG},
      url={https://arxiv.org/abs/2508.11408}, 
}

@misc{yoshihara2025practicaltwostagerecipemathematical,
      title={A Practical Two-Stage Recipe for Mathematical LLMs: Maximizing Accuracy with SFT and Efficiency with Reinforcement Learning}, 
      author={Hiroshi Yoshihara and Taiki Yamaguchi and Yuichi Inoue},
      year={2025},
      eprint={2507.08267},
      archivePrefix={arXiv},
      primaryClass={cs.LG},
      url={https://arxiv.org/abs/2507.08267}, 
}

@misc{wei2022finetunedlanguagemodelszeroshot,
      title={Finetuned Language Models Are Zero-Shot Learners}, 
      author={Jason Wei and Maarten Bosma and Vincent Y. Zhao and Kelvin Guu and Adams Wei Yu and Brian Lester and Nan Du and Andrew M. Dai and Quoc V. Le},
      year={2022},
      eprint={2109.01652},
      archivePrefix={arXiv},
      primaryClass={cs.CL},
      url={https://arxiv.org/abs/2109.01652}, 
}

@misc{ouyang2022traininglanguagemodelsfollow,
      title={Training language models to follow instructions with human feedback}, 
      author={Long Ouyang and Jeff Wu and Xu Jiang and Diogo Almeida and Carroll L. Wainwright and Pamela Mishkin and Chong Zhang and Sandhini Agarwal and Katarina Slama and Alex Ray and John Schulman and Jacob Hilton and Fraser Kelton and Luke Miller and Maddie Simens and Amanda Askell and Peter Welinder and Paul Christiano and Jan Leike and Ryan Lowe},
      year={2022},
      eprint={2203.02155},
      archivePrefix={arXiv},
      primaryClass={cs.CL},
      url={https://arxiv.org/abs/2203.02155}, 
}

@misc{zeng2025simplerlzooinvestigatingtamingzero,
      title={SimpleRL-Zoo: Investigating and Taming Zero Reinforcement Learning for Open Base Models in the Wild}, 
      author={Weihao Zeng and Yuzhen Huang and Qian Liu and Wei Liu and Keqing He and Zejun Ma and Junxian He},
      year={2025},
      eprint={2503.18892},
      archivePrefix={arXiv},
      primaryClass={cs.LG},
      url={https://arxiv.org/abs/2503.18892}, 
}

@misc{yue2025doesreinforcementlearningreally,
      title={Does Reinforcement Learning Really Incentivize Reasoning Capacity in LLMs Beyond the Base Model?}, 
      author={Yang Yue and Zhiqi Chen and Rui Lu and Andrew Zhao and Zhaokai Wang and Yang Yue and Shiji Song and Gao Huang},
      year={2025},
      eprint={2504.13837},
      archivePrefix={arXiv},
      primaryClass={cs.AI},
      url={https://arxiv.org/abs/2504.13837}, 
}

@misc{tie2025surveyposttraininglargelanguage,
      title={A Survey on Post-training of Large Language Models}, 
      author={Guiyao Tie and Zeli Zhao and Dingjie Song and Fuyang Wei and Rong Zhou and Yurou Dai and Wen Yin and Zhejian Yang and Jiangyue Yan and Yao Su and Zhenhan Dai and Yifeng Xie and Yihan Cao and Lichao Sun and Pan Zhou and Lifang He and Hechang Chen and Yu Zhang and Qingsong Wen and Tianming Liu and Neil Zhenqiang Gong and Jiliang Tang and Caiming Xiong and Heng Ji and Philip S. Yu and Jianfeng Gao},
      year={2025},
      eprint={2503.06072},
      archivePrefix={arXiv},
      primaryClass={cs.CL},
      url={https://arxiv.org/abs/2503.06072}, 
}

@misc{zeng2025simplerl,
      title={SimpleRL-Zoo: Investigating and Taming Zero Reinforcement Learning for Open Base Models in the Wild}, 
      author={Weihao Zeng and Yuzhen Huang and Qian Liu and Wei Liu and Keqing He and Zejun Ma and Junxian He},
      year={2025},
      eprint={2503.18892},
      archivePrefix={arXiv},
      primaryClass={cs.LG},
      url={https://arxiv.org/abs/2503.18892}, 
}

@misc{DBLP:conf/acl/HeLBHTSHHHZLQL024,
      title={OlympiadBench: A Challenging Benchmark for Promoting AGI with Olympiad-Level Bilingual Multimodal Scientific Problems}, 
      author={Chaoqun He and Renjie Luo and Yuzhuo Bai and Shengding Hu and Zhen Leng Thai and Junhao Shen and Jinyi Hu and Xu Han and Yujie Huang and Yuxiang Zhang and Jie Liu and Lei Qi and Zhiyuan Liu and Maosong Sun},
      year={2024},
      eprint={2402.14008},
      archivePrefix={arXiv},
      primaryClass={cs.CL},
      url={https://arxiv.org/abs/2402.14008}, 
}

@misc{DBLP:conf/nips/LewkowyczADDMRS22,
      title={Solving Quantitative Reasoning Problems with Language Models}, 
      author={Aitor Lewkowycz and Anders Andreassen and David Dohan and Ethan Dyer and Henryk Michalewski and Vinay Ramasesh and Ambrose Slone and Cem Anil and Imanol Schlag and Theo Gutman-Solo and Yuhuai Wu and Behnam Neyshabur and Guy Gur-Ari and Vedant Misra},
      year={2022},
      eprint={2206.14858},
      archivePrefix={arXiv},
      primaryClass={cs.CL},
      url={https://arxiv.org/abs/2206.14858}, 
}

@misc{openai2025gptoss120bgptoss20bmodel,
      title={gpt-oss-120b \& gpt-oss-20b Model Card}, 
      author={OpenAI and : and Sandhini Agarwal and Lama Ahmad and Jason Ai and Sam Altman and Andy Applebaum and Edwin Arbus and Rahul K. Arora and Yu Bai and Bowen Baker and Haiming Bao and Boaz Barak and Ally Bennett and Tyler Bertao and Nivedita Brett and Eugene Brevdo and Greg Brockman and Sebastien Bubeck and Che Chang and Kai Chen and Mark Chen and Enoch Cheung and Aidan Clark and Dan Cook and Marat Dukhan and Casey Dvorak and Kevin Fives and Vlad Fomenko and Timur Garipov and Kristian Georgiev and Mia Glaese and Tarun Gogineni and Adam Goucher and Lukas Gross and Katia Gil Guzman and John Hallman and Jackie Hehir and Johannes Heidecke and Alec Helyar and Haitang Hu and Romain Huet and Jacob Huh and Saachi Jain and Zach Johnson and Chris Koch and Irina Kofman and Dominik Kundel and Jason Kwon and Volodymyr Kyrylov and Elaine Ya Le and Guillaume Leclerc and James Park Lennon and Scott Lessans and Mario Lezcano-Casado and Yuanzhi Li and Zhuohan Li and Ji Lin and Jordan Liss and Lily and Liu and Jiancheng Liu and Kevin Lu and Chris Lu and Zoran Martinovic and Lindsay McCallum and Josh McGrath and Scott McKinney and Aidan McLaughlin and Song Mei and Steve Mostovoy and Tong Mu and Gideon Myles and Alexander Neitz and Alex Nichol and Jakub Pachocki and Alex Paino and Dana Palmie and Ashley Pantuliano and Giambattista Parascandolo and Jongsoo Park and Leher Pathak and Carolina Paz and Ludovic Peran and Dmitry Pimenov and Michelle Pokrass and Elizabeth Proehl and Huida Qiu and Gaby Raila and Filippo Raso and Hongyu Ren and Kimmy Richardson and David Robinson and Bob Rotsted and Hadi Salman and Suvansh Sanjeev and Max Schwarzer and D. Sculley and Harshit Sikchi and Kendal Simon and Karan Singhal and Yang Song and Dane Stuckey and Zhiqing Sun and Philippe Tillet and Sam Toizer and Foivos Tsimpourlas and Nikhil Vyas and Eric Wallace and Xin Wang and Miles Wang and Olivia Watkins and Kevin Weil and Amy Wendling and Kevin Whinnery and Cedric Whitney and Hannah Wong and Lin Yang and Yu Yang and Michihiro Yasunaga and Kristen Ying and Wojciech Zaremba and Wenting Zhan and Cyril Zhang and Brian Zhang and Eddie Zhang and Shengjia Zhao},
      year={2025},
      eprint={2508.10925},
      archivePrefix={arXiv},
      primaryClass={cs.CL},
      url={https://arxiv.org/abs/2508.10925}, 
}

@misc{DBLP:conf/nips/HendrycksBKABTS21,
      title={Measuring Mathematical Problem Solving With the MATH Dataset}, 
      author={Dan Hendrycks and Collin Burns and Saurav Kadavath and Akul Arora and Steven Basart and Eric Tang and Dawn Song and Jacob Steinhardt},
      year={2021},
      eprint={2103.03874},
      archivePrefix={arXiv},
      primaryClass={cs.LG},
      url={https://arxiv.org/abs/2103.03874}, 
}

@misc{li2024numinamath,
  author = {Jia LI and Edward Beeching and Lewis Tunstall and Ben Lipkin and Roman Soletskyi and Shengyi Costa Huang and Kashif Rasul and Longhui Yu and Albert Jiang and Ziju Shen and Zihan Qin and Bin Dong and Li Zhou and Yann Fleureau and Guillaume Lample and Stanislas Polu},
  title = {NuminaMath},
  year = {2024},
  publisher = {Numina},
  journal = {Hugging Face repository},
  howpublished = {\url{[https://huggingface.co/AI-MO/NuminaMath-CoT](https://github.com/project-numina/aimo-progress-prize/blob/main/report/numina_dataset.pdf)}}
}

@misc{liu2025understanding,
      title={Understanding R1-Zero-Like Training: A Critical Perspective}, 
      author={Zichen Liu and Changyu Chen and Wenjun Li and Penghui Qi and Tianyu Pang and Chao Du and Wee Sun Lee and Min Lin},
      year={2025},
      eprint={2503.20783},
      archivePrefix={arXiv},
      primaryClass={cs.LG},
      url={https://arxiv.org/abs/2503.20783}, 
}

@misc{shao2024deepseekmath,
      title={DeepSeekMath: Pushing the Limits of Mathematical Reasoning in Open Language Models}, 
      author={Zhihong Shao and Peiyi Wang and Qihao Zhu and Runxin Xu and Junxiao Song and Xiao Bi and Haowei Zhang and Mingchuan Zhang and Y. K. Li and Y. Wu and Daya Guo},
      year={2024},
      eprint={2402.03300},
      archivePrefix={arXiv},
      primaryClass={cs.CL},
      url={https://arxiv.org/abs/2402.03300}, 
}

@misc{huang2025blending,
      title={Blending Supervised and Reinforcement Fine-Tuning with Prefix Sampling}, 
      author={Zeyu Huang and Tianhao Cheng and Zihan Qiu and Zili Wang and Yinghui Xu and Edoardo M. Ponti and Ivan Titov},
      year={2025},
      eprint={2507.01679},
      archivePrefix={arXiv},
      primaryClass={cs.LG},
      url={https://arxiv.org/abs/2507.01679}, 
}

@misc{zhao2020energybasedhindsightexperienceprioritization,
      title={Energy-Based Hindsight Experience Prioritization}, 
      author={Rui Zhao and Volker Tresp},
      year={2020},
      eprint={1810.01363},
      archivePrefix={arXiv},
      primaryClass={cs.LG},
      url={https://arxiv.org/abs/1810.01363}, 
}

@misc{rauber2019hindsightpolicygradients,
      title={Hindsight policy gradients}, 
      author={Paulo Rauber and Avinash Ummadisingu and Filipe Mutz and Juergen Schmidhuber},
      year={2019},
      eprint={1711.06006},
      archivePrefix={arXiv},
      primaryClass={cs.LG},
      url={https://arxiv.org/abs/1711.06006}, 
}

@misc{liu2024oat,
  title={OAT: A research-friendly framework for LLM online alignment},
  author={Liu, Zichen and Chen, Changyu and Wan, Xinyi and Du, Chao and Lee, Wee Sun and Lin, Min},
  year={2024},
  url={https://github.com/sail-sg/oat},
}

@misc{shenfeld2025rlsrazoronlinereinforcement,
      title={RL's Razor: Why Online Reinforcement Learning Forgets Less}, 
      author={Idan Shenfeld and Jyothish Pari and Pulkit Agrawal},
      year={2025},
      eprint={2509.04259},
      archivePrefix={arXiv},
      primaryClass={cs.LG},
      url={https://arxiv.org/abs/2509.04259}, 
}

@article{qwen2025qwen25technicalreport,
  title={Qwen2.5 Technical Report},
  author={Qwen An Yang and Baosong Yang and Beichen Zhang and Binyuan Hui and Bo Zheng and Bowen Yu and Chengyuan Li and Dayiheng Liu and Fei Huang and Guanting Dong and Haoran Wei and Huan Lin and Jian Yang and Jianhong Tu and Jianwei Zhang and Jianxin Yang and Jiaxin Yang and Jingren Zhou and Junyang Lin and Kai Dang and Keming Lu and Keqin Bao and Kexin Yang and Le Yu and Mei Li and Mingfeng Xue and Pei Zhang and Qin Zhu and Rui Men and Runji Lin and Tianhao Li and Tingyu Xia and Xingzhang Ren and Xuancheng Ren and Yang Fan and Yang Su and Yi-Chao Zhang and Yunyang Wan and Yuqi Liu and Zeyu Cui and Zhenru Zhang and Zihan Qiu and Shanghaoran Quan and Zekun Wang},
  journal={ArXiv},
  year={2024},
  volume={abs/2412.15115},
  url={https://api.semanticscholar.org/CorpusID:274859421}
}

@misc{zhang2025surveyreinforcementlearninglarge,
      title={A Survey of Reinforcement Learning for Large Reasoning Models}, 
      author={Kaiyan Zhang and Yuxin Zuo and Bingxiang He and Youbang Sun and Runze Liu and Che Jiang and Yuchen Fan and Kai Tian and Guoli Jia and Pengfei Li and Yu Fu and Xingtai Lv and Yuchen Zhang and Sihang Zeng and Shang Qu and Haozhan Li and Shijie Wang and Yuru Wang and Xinwei Long and Fangfu Liu and Xiang Xu and Jiaze Ma and Xuekai Zhu and Ermo Hua and Yihao Liu and Zonglin Li and Huayu Chen and Xiaoye Qu and Yafu Li and Weize Chen and Zhenzhao Yuan and Junqi Gao and Dong Li and Zhiyuan Ma and Ganqu Cui and Zhiyuan Liu and Biqing Qi and Ning Ding and Bowen Zhou},
      year={2025},
      eprint={2509.08827},
      archivePrefix={arXiv},
      primaryClass={cs.CL},
      url={https://arxiv.org/abs/2509.08827}, 
}
